\newcommand{\vect}[1]{\bm{#1}}
\newcommand{\phVar}[0]{s}
\newcommand{\fterm}[0]{\vect{f}_{\phVar}}
\newcommand{\bvect}[1]{\bar{\vect{#1}}}
\newcommand{\hvect}[1]{\hat{\vect{#1}}}
\newcommand{\dvect}[1]{\dot{\vect{#1}}}
\newcommand{\ddvect}[1]{\ddot{\vect{#1}}}
\newcommand{\vphi}[0]{\vect{\phi}}
\newcommand{\vPhi}[0]{\vect{\Phi}}
\newcommand{\ddhvect}[1]{\ddot{\hvect{#1}}}
\newcommand{\diag}[1]{\text{diag}\left(#1\right)}
\newcommand{\norm}[1]{\left|\left|{#1}\right|\right|}
\newcommand{\trace}[1]{\text{tr}\{{#1}\}}
\newcommand{\cth}[0]{\cos(\theta_2)}
\newcommand{\sth}[0]{\sin(\theta_2)}
\newcommand{\partDer}[2]{\frac{\partial #1}{\partial #2}}
\newcommand{\partDerSq}[2]{\frac{\partial^2 #1}{\partial^2 #2}}
\definecolor{lightbrown}{rgb}{0.85, 0.33, 0.1}
\newcommand{\update}[1]{#1}
\newcommand{\theoref}[1]{\textit{Theorem} \ref{#1}}
\newcommand{\remarkref}[1]{\textit{Remark} \ref{#1}}
\newcommand{\stkout}[1]{\ifmmode\text{\sout{\ensuremath{#1}}}\else\sout{#1}\fi}
\newcommand\scalemath[2]{\scalebox{#1}{\mbox{\ensuremath{\displaystyle #2}}}}
\DeclareUrlCommand\ULurl{%
  \renewcommand\UrlLeft{\uline\bgroup}%
  \renewcommand\UrlRight{\egroup}}
\theoremstyle{thmstyleone}%
\newtheorem{theorem}{Theorem}
\theoremstyle{thmstyletwo}%
\newtheorem{remark}{Remark}%
\theoremstyle{thmstylethree}%
\begin{document}

\title[Article Title]{
Dynamic via-points and improved spatial generalization for online trajectory planning with Dynamic Movement Primitives
}


\author*[1]{\fnm{Antonis} \sur{Sidiropoulos}}\email{antosidi@ece.auth.gr}


\author[1]{\fnm{Zoe} \sur{Doulgeri}}\email{doulgeri@ece.auth.gr}

\affil*[1]{\orgdiv{Department of
Electrical and Computer Engineering}, \orgname{Aristotle University of Thessaloniki}, \orgaddress{\street{Panepistimioupoli}, \city{Thessaloniki}, \postcode{54124}, \country{Greece}}}




\abstract{
Dynamic Movement Primitives (DMP) have found remarkable applicability and success in various robotic tasks, which can be mainly attributed to their generalization, modulation and robustness properties. Nevertheless, the spatial generalization of DMP can be problematic in some cases, leading to excessive or unnatural spatial scaling. Moreover, incorporating intermediate points (via-points) to adjust the DMP trajectory, is not adequately addressed. In this work we propose an improved online spatial generalization, that remedies the shortcomings of the classical DMP generalization, and moreover allows the incorporation of dynamic via-points. This is achieved by designing an online adaptation scheme for the DMP weights which is proved to minimize the distance from the demonstrated acceleration profile to retain the shape of the demonstration, subject to dynamic via-point and initial/final state constraints. Extensive comparative simulations with the classical and other  DMP variants are conducted, while experimental results validate the applicability and efficacy of the proposed method.
}

\keywords{Dynamic Movement Primitives, Via-points, Programming by Demonstration}



\maketitle


\section{Introduction} \label{sec:Introduction}

\noindent Dynamic Movement Primitives (DMP) \cite{Ijspeert2013, PASTOR2013351} have emerged as a promising method for encoding a desired trajectory and generalizing it to new situations. It is particularly favoured thanks to its ease of training, generalization, robustness and on-line modulation properties and has been applied in a plethora of practical robotic tasks and applications \cite{Pastor_2011_DMP, Ude_Task_specific_DMP_2010, Umlauft_2014, Hitting_DMP_Peters_2013, Gams_DMP_force_2014, Luo2022}. The trajectory to be encoded is typically provided through PbD (Programming by Demonstration) \cite{PbD_Billard}, as an intuitive and efficient means of teaching human-skills to robots.

In DMP, the objective of generalization is to generate a trajectory from a new initial position to a new, possibly time-varying, target position, while also preserving the shape of the demonstrated trajectory. 
To this end, different DMP variations and/or spatial generalization mechanisms have been proposed with the most well-known being the classical DMP formulation from \cite{Ijspeert2013}. However, the classical DMP spatial scaling has some important shortcomings, i.e. over-scaling of the generalized trajectory when the demonstrated end-points (initial and final position) are close, failure to generate a motion when the new end-points during execution coincide and mirroring of the trajectory when the sign of the difference between the new end-points is different from the sign of the difference of demonstrated ones. To remedy these issues, the bio-inspired DMP formulation is proposed in \cite{Bio_DMP_2009}. Nevertheless, it exhibits poor generalization for targets that are not close to the demonstration, as highlighted in \cite{Mollifier_bio_DMP_2021}, where an improved version of the bio-inspired DMP is proposed to enhance its spatial generalization. A novel generalization to tackle the scaling issues of classical DMP is also proposed in \cite{Koutras_DMP_scaling}. While \cite{Mollifier_bio_DMP_2021}, \cite{Koutras_DMP_scaling} resolve the classical DMP scaling issues, they introduce over-scaling in trajectories with amplitudes bigger than the distance between the initial and final position, which is typical in many practical applications, like in packing applications where lifting an object and placing it down in a box is required. A different offline spatial generalization approach is proposed in \cite{DMP_via_opt_2015} based on minimizing a norm, that is learned from multiple demonstrations, solving an iterative optimization problem.

Few DMP works consider via points on-line in their spatial generalization. Via-points can be particularly useful to adjust a trajectory on-line, e.g. to avoid obstacles in a predictable way by specifying through the via-points how the robot should circumvent them \cite{VMP_2019} or to push aside other objects that obstruct reaching pathways to a desired target \cite{via_points_fruits}.
\update{
Moreover, via-points offer a more flexible way of online adjusting a DMP trajectory, in contrast to other approaches that propose to join separate DMPs \cite{nemec_ude_2012_DMP_join, Kulvicius_2012_DMP_join}. In the latter case, the trajectory segments need to be pre-specified, the complexity increases since multiple DMPs have to be defined to connect the via-points and also extra adjustments are required to transition smoothly from one DMP segment to the next without halting the motion \cite{Hitting_DMP_Peters_2013}.}
Via-points are considered off-line  
in a DMP-like variant in \cite{DMP_QP_2015} through offline optimization and in   \cite{DMP_SQP_2015}, which can include via-point constraints. 
So far, online incorporation of via-points  is performed with probabilistic DMP \cite{ProIP_2014}, by online adapting the DMP weights to the new forcing term values, based on the provided via-points. Calculating the new forcing term value at  via-points requires estimates of velocity and acceleration at the via-point, which can result in significant noise, that gets even more adverse, the sparser the via-points are arranged, as shown in \cite{ProMP_2017}.  
Another approach to include via-points is proposed in \cite{Weitschat_2018}, by introducing a time-varying attractor that shifts smoothly from one via-point to the next, until the final target \cite{Weitschat_2018}. Nevertheless, this does not ensure that the via-point will be reached and the distortions in the trajectory shape that this time-varying attractor incurs are not predictable.

Other movement primitives that can include via-points are the Probabilistic Movement Primitives (ProMP) \cite{ProMPs_Paraschos_2013, ProMPs_Paraschos_2018} and  the Kernelized Movement Primitives (KMP) \cite{KMP_2019}. However,  multiple demonstrations are required for training such primitives.
Moreover, they cannot generalize well away from the demonstrations and have higher computational complexity with multiple DoFs. Of course, these primitives have other advantages  compared to DMP and vice versa; choosing the most appropriate depends on the target application. In \cite{VMP_2019} the Via-point Movement Primitives (VMP) are introduced, that adopt ideas from DMP and ProMP, to achieve adaptation to via-points, and were shown to have better extrapolation capabilities compared to ProMP, being also able to be trained from a single demonstration.

All aforementioned works are either off-line, or those that are online, consider static via-points, and do not address the case of via-points that may change dynamically. Dynamic changes of via-points occur if via-points are defined with respect to the position of the target or an obstacle, which is perturbed during execution. 
For instance in a human-robot collaborative packing scenario a via-point may be defined on top of the target box which may be displaced by the human to a more ergonomic position for him.  The robot has to adjust on-line to this change.

In this work we propose an improved online spatial generalization for DMP, that 
does not exhibit
the shortcomings of the spatial scaling of the other DMP variants and can incorporate dynamic via-points.
We achieve this by exploiting the novel DMP formulation presented in our previous work \cite{Antosidi_Rev_DMP}
to derive an online adaptation scheme for the DMP weights, based on minimizing the distance from the learned acceleration profile under the equality constraints of via-points.
Moreover, we show how the proposed novel generalization can also be combined with our previous work \cite{Antosidi_DMP_constr_2022} to impose kinematic inequality constraints in order to generate feasible trajectories in the presence of kinematic bounds related to the robot and the task environment (obstacles and via points). 


The rest of this paper is organized as follows: In Section \ref{sec:DMP_prelim} we provide the preliminaries for the novel DMP formulation from \cite{Antosidi_Rev_DMP}. Section \ref{sec:Proposed_generalization} is devoted to the proposed novel online generalization. Simulations that demonstrate how the proposed method performs in comparison with the classical and other SoA DMP variants are provided in section \ref{sec:Simulations}.
The practical usefulness and efficacy of the proposed method is further showcased in the experimental Section \ref{sec:Experiments}. Finally, we draw the conclusions in Section \ref{sec:Conclusions}.
The source code for all conducted simulations and experiments can be found at
{\footnotesize \ULurl{github.com/Slifer64/novel_dmp_generalization.git}}.


\section{DMP preliminaries} \label{sec:DMP_prelim}

In this section we briefly present the DMP formulation introduced in our previous work \cite{Antosidi_Rev_DMP}, which we exploit in this work to derive an improved on-line spatial generalization that can also incorporate dynamic via-points. As shown in \cite{Antosidi_Rev_DMP}, this formulation is mathematically equivalent to the classical DMP formulation from \cite{Ijspeert2013}, retaining all properties of the latter. 

Consider the state $\vect{y} \in \mathbb{R}^n$, which can be joint positions, Cartesian position or Cartesian orientation expressed using the quaternion logarithm (see Appendix A for preliminaries on unit quaternions). 
The DMP consists of two sets of differential equations, the transformation and the canonical system.
A DMP can encode a desired trajectory $t_j, \ \vect{y}_{d,j}$ for $j=1...m$ where $m$ is the total number of points and $T_{f,d} = t_m - t_1$ the time duration of the demo, and through its transformation system it can generalize this trajectory spatially from an initial position $\vect{y}_0$ to a desired target position $\vect{g}$. 
Temporal scaling, i.e. execution of this trajectory with a different time duration or speed, is achieved through the canonical system, which provides the clock $s$ (time substitute) for the transformation system, to avoid direct time dependency.
The \textbf{transformation system} is given by:
\begin{equation} \label{eq:GMP_tf_sys}
    \ddvect{y} = \ddvect{y}_{s} - \vect{D}(\dvect{y}-\dvect{y}_{s}) - \vect{K}(\vect{y} - \vect{y}_{s} ) 
\end{equation} 
\begin{align}
    \vect{y}_{s}(s) &= \vect{K}_s (\vect{f}_p(s) - \hvect{y}_{d,0}) + \vect{y}_0 \label{eq:y_x_vec} \\
    \vect{f}_p(s) &= \vect{W}^T \vphi(s)\label{eq:fp_vec} \\
    \vect{K}_s &= \diag{(\vect{g} - \vect{y}_0) ./ (\hvect{g}_d - \hvect{y}_{d,0})} \label{eq:Ks_vec}
\end{align}
with $\vect{K}, \ \vect{D} \in \vect{\mathcal{S}}_{++}^n$, with $\vect{\mathcal{S}}_{++}^n$ denoting the set of symmetric $n \times n$ positive definite matrices,  
$\vect{y}_{s}$ is the scaled learned trajectory whose derivatives can be calculated analytically in closed form \cite{Antosidi_Rev_DMP}, the matrix $\vect{K}_s$ achieves the spatial scaling from the new initial position $\vect{y}_0$ to a new target $\vect{g}$, while $\hvect{g}_d \triangleq \vect{f}_p(s(T_{f,d}))$, $\hvect{y}_{d,0} \triangleq \vect{f}_p(s(0))$ are the learned initial and target demo positions. The demonstrated position trajectory is encoded through the weighted sum of Gaussians in $\vect{f}_p(s)$, where the DMP weights $\vect{W} \in \mathbb{R}^{K \times n}$ can be determined using Least Square, i.e. $\text{min}_{\vect{W}} \sum_{j=1}^{m} || \vect{y}_{d,j} - \vect{W}^T \vect{\phi}_j||_2^2$), or Locally Weighted Regression (LWR) \cite{Ijspeert2013}.
The Gaussian kernels are $\vphi(s)^T = [\psi_1(s) \ \cdots \ \psi_K(s)]/{\sum_{i=1}^{K} \psi_i(s)}$, with $\psi_i(s) = exp(-h_i(s-c_i)^2)$ with $c_i$ being the centers and $h_i$ the inverse widths. A reasonable heuristic is to set the centers $c_i$ equally spaced in $[0, \ 1]$ and set $h_i = 1 / (a_h(c_{i+1} - c_i))^2$ where $a_h$ controls the standard deviation of the Gaussian kernels. Choosing $a_h \in [1.2, \ 1.5]$ produces empirically good approximation results.

Given $\phVar(0) \triangleq \phVar_0$, $\phVar(\infty) \triangleq \phVar_f$, and considering without loss of generality  $\phVar_0=0$ and $\phVar_f=1$, the \textbf{canonical system} is given by:
\begin{equation} \label{eq:GMP_can_sys}
    \ddot{\phVar} = 
    \begin{cases}
        d_1(\dot{\phVar}_d - \dot{\phVar}) &, \ \phVar < \phVar_f \\
        -d_2\dot{\phVar} - k_2(\phVar - \phVar_f) &, \ \phVar \ge \phVar_f \\
	 \end{cases}
\end{equation}
with $d_1, d_2, k_2 > 0$, $d_2 \ge 2\sqrt{k_2}$, $\phVar(0) = 0$ and $\dot{\phVar}(0) = \dot{\phVar}_d = 1/T_f$, where $T_f$ is the desired motion duration. In the general case $\dot{\phVar}_d$ can also be time-varying.
Notice that \eqref{eq:GMP_can_sys} guarantees that the phase $\phVar$ convergences from $\phVar(0)=0$ to 
$\phVar \rightarrow 1$, $\dot{\phVar},\ddot{\phVar} \rightarrow 0$ as $t \rightarrow \infty$.

As shown in \cite{Antosidi_Rev_DMP}, the novel DMP formulation \eqref{eq:GMP_tf_sys}-\eqref{eq:Ks_vec} is mathematically equivalent to the classical DMP. In particular, (1)-(4) can be mathematically manipulated to be written in a form similar to the classical one:
\begin{align}
    &\dvect{z} = \vect{K}( \vect{g} - \vect{y} ) - \vect{D} \vect{z} + (\vect{g} - \vect{y}_0)\fterm(\phVar) \label{eq:GMP_equiv_zdot} \\
    &\dvect{y} = \vect{z} \label{eq:GMP_equiv_ydot} \\
    &\scalemath{0.96}{\fterm(\phVar) = \diag{1./(\vect{g}-\vect{y}_0)}\left(\ddvect{y}_{\phVar} + \vect{D}\dvect{y}_{\phVar} - \vect{K}(\vect{g} - \vect{y}_{\phVar})\right)} \label{eq:GMP_forc_term}
\end{align}
The system is globally asymptotically stable at $\vect{g}$, since for $t \rightarrow \infty$, $\fterm \rightarrow \vect{0}$, following from 
$\dot{\phVar}, \ddot{\phVar} \rightarrow 0$ implying $\dvect{y}_{\phVar}, \ddvect{y}_{\phVar} \rightarrow \vect{0}$ and $s \rightarrow 1 \Rightarrow \vect{y}_{\phVar} \rightarrow \vect{g}$. 

Spatial scaling is achieved through $\vect{K}_s$ from \eqref{eq:Ks_vec} and temporal scaling by changing $\dot{\phVar}_d$ in \eqref{eq:GMP_can_sys}. These are equivalent to changing $\{\vect{g}, \ \vect{y}_0\}$ or the temporal scaling $\tau$ according to $\dot{\tau} = d_1 (\tau_d - \tau)$ in the classical DMP \cite{Ijspeert2013}, resulting in the exact same trajectory with the same spatial and temporal scaling \cite{Antosidi_Rev_DMP}. 

Notice that to achieve temporal scaling, in contrast to the classical DMP formulation, there is no need to pre-multiply \eqref{eq:GMP_equiv_zdot} - \eqref{eq:GMP_equiv_ydot} by $\tau$ , as this is handled by $\dot{\phVar}, \ddot{\phVar}$ which are included in $\dvect{y}_{\phVar}, \ddvect{y}_{\phVar}$ in $\fterm(\phVar)$ (this can be better understood by the mathematical equivalence analysis in \cite{Antosidi_Rev_DMP}).

For encoding Cartesian orientation we can simply employ the quaternion logarithm, i.e. set $\vect{y} = \log(\vect{Q}*\bvect{Q}_0)$ where $\vect{Q} \in \mathbb{S}^3$ is the orientation as unit quaternion. We can then obtain the unit quaternion, rotational velocity and acceleration from $\vect{y}$ and its derivatives using the mappings \eqref{eq:quatExp}, \eqref{eq:omega_qlogDot} and \eqref{eq:omegaDot_qlogDDot} from Appendix A. 
Alternatively we can rewrite the transformation system as:
\begin{equation} \label{eq:GMP_omega_tf_sys}
    \dvect{\omega} = \dvect{\omega}_s - \vect{D}(\vect{\omega} - \vect{\omega}_s) -\vect{K}\log(\vect{Q}*\bvect{Q}_s)
\end{equation}
where $\vect{Q}_s$, $\vect{\omega}_s$ and $\dvect{\omega}_s$ are obtained from \eqref{eq:quatExp}, \eqref{eq:omega_qlogDot} and \eqref{eq:omegaDot_qlogDDot} from Appendix A using \eqref{eq:y_x_vec} and its analytically calculated derivatives.

\section{Proposed generalization} \label{sec:Proposed_generalization}

Here we present a novel spatial generalization scheme that does not suffer from the drawbacks of the classical DMP generalization, reported in \cite{Bio_DMP_2009} and further allows the on-line incorporation of dynamic via-points\footnote{The term "point" can refer either to joint positions, Cartesian position, Cartesian orientation or Cartesian pose.}, while preserving the shape of the demonstrated trajectory, ensuring a smooth trajectory generation even for abrupt target or via-point changes.
To this end, we remove the classical DMP scaling $\vect{K}_s$, and redefine \eqref{eq:y_x_vec} as 
\begin{equation}
    \vect{y}_{\phVar} = \vect{f}_{p}(\phVar)
\end{equation}
and achieve spatial generalization by optimizing online the DMP weights under the initial and final state constraints i.e. start at $t=0$ from $\vect{y}_0$  and reach the target $\vect{g}$\footnote{We assume that $\vect{g}$ can change between $0$ and $T_f$ and is constant afterwards.} at $t=T_f$ with  zero velocity/acceleration. 
In order to generate a motion profile similar to the demonstration we minimize the distance from the learned (from the demonstration) acceleration profile. This online optimization allows also the incorporation of via-points in the spatial generalization.
The above is translated to the following optimization problem that has to be solved at each time-step $i$:
\begin{subequations} \label{eq:opt_prob}
\begin{align}
    \text{min}_{\vect{W}} & \sum_{j=1}^{m} \norm{\partDerSq{\hvect{y}_{d,j}}{s}  - \vect{W}^T \partDerSq{\vphi_{j}}{s}}_2^2 \label{eq:cost_function} \\
    \text{s.t.} \quad & \vect{W}^T \vect{A}(0) = \vect{Y}_0 \label{eq:init_state_constr}\\
                      & \vect{W}^T \vect{A}(1) = \vect{G}_i \label{eq:final_state_constr}\\
                      & \vect{W}^T \vPhi_{v,i} = \vect{Y}_{v,i} \label{eq:vp_constr} \\
                      & \vect{W}^T \vect{C}_j = \vect{Y}_j \ , \ j=1...i  \label{eq:current_state_constr} 
\end{align}
\end{subequations}
where the learned acceleration profile is\footnote{Notice from \eqref{eq:GMP_can_sys} that, for nominal execution, $\ddot{\phVar} = 0$ and hence the acceleration, i.e. the $2$nd time derivative of \eqref{eq:y_x_vec}, is just $\partial^2 \hvect{y}_{d,j} / \partial \phVar^2$ scaled by the constant $1/T_f^2$.} $\partDerSq{\hvect{y}_{d,j}}{s} = \vect{W}_0^T \partDerSq{\vphi_{j}}{s}$ with $\vect{W}_0 = \text{argmin}_{\vect{W}} J(\vect{W}^T \vphi(s), \vect{y}_d)$.
The matrices in the first two equality constraints, \eqref{eq:init_state_constr}, \eqref{eq:final_state_constr}, with $\vect{A}(s) = [\vphi(s) \ \dot{\vphi}(s) \ \ddot{\vphi}(s)]$, enforce the initial state $\vect{Y}_0 = [\vect{y}_0 \ \vect{0}_{n \times 1} \ \vect{0}_{n \times 1} ]$ and final state $\vect{G}_i = [\vect{g}_i \ \vect{0}_{n \times 1} \ \vect{0}_{n \times 1} ]$ conditions. 
Via-point constraints, which can be used to locally modify the DMP trajectory are defined in \eqref{eq:vp_constr}, where $\vect{Y}_{v,i} = [\vect{y}_{v,1} \ ... \ \vect{y}_{v,L_i}]$ contains the via-points, with $L_i$ their number at the current time-step $i$, and $\vPhi_{v,i} = [\vphi(s_{v,1}) \ ... \ \vphi(s_{v,L_i})]$.\footnote{If the phase $s_{v,l}$, $l=1...L_i$, is not provided explicitly, a reasonable heuristic is to calculate it as $s_v = \text{argmin}_{s_k}\norm{\vect{y}_s(s_k) - \vect{y}_v}$, where $s_k$ in uniformly sampled in $(s, 1]$. Empirically, taking approximately $80$ samples is sufficient, as the minimization need not be exact.}
The constraints in \eqref{eq:current_state_constr} with $\vect{C}_j = \vect{C}(s_j) = [\vphi(s_j) \ \dot{\vphi}(s_j) \ \ddot{\vphi}(s_{j-1})]$ and $\vect{Y}_j = [\vect{y}_j \ \dvect{y}_j \ \ddvect{y}_{j-1}]$ encode the current and all previous state constraints up to timestep $i$.
The default option is to set $\vect{Y}_j = \vect{W}_{i-1}^T \vect{C}_j$. An alternative option is to set $\vect{Y}_j$ to the actual robot's state at step $j$ if it is also desirable to adapt online the DMP to changes of the robot's state induced by external signals, like measured forces.
Regardless of the choice of $\vect{Y}_j$, the current state constraint for $j=i$, ensures that a smooth trajectory is generated by \eqref{eq:GMP_tf_sys} even in the presence of abrupt target changes.
The previous state constraints $\vect{Y}_{1:i-1}$ guarantee that if the DMP is run in reverse as in \cite{Antosidi_Rev_DMP}, using the final adapted weights from the forward execution, the same trajectory (in reverse) will be executed. This could be expedient for safely retracting after the forward execution of a task, especially if the task has geometric constraints (like an insertion) which will have to be respected in the retraction phase.

\subsection{Online DMP weights adaptation} \label{subsec:online_DMP_weights_adapt}

\noindent Solving the problem in \eqref{eq:opt_prob} on-line at each control cycle would be too slow and hence prohibitive for real-time usage.
Instead we can employ the following recursive 2-step update at each time-step $i>0$, which has computational complexity O($K^2$) and can be carried out in real-time:
\\\textbf{step 1}: Remove the effect of the previous target $\vect{g}_{i-1}$ and via-points:
\begin{subequations} \label{eq:downdate}
\begin{align}
    \hvect{W}_{i} &= \vect{W}_{i-1} + \hvect{K}_i(\hvect{Z}_i - \vect{W}_{i-1}^T\hvect{H}_i)^T \label{eq:w_downdate} \\
    \hvect{P}_{i} &= \vect{P}_{i-1} - \hvect{K}_i\hvect{H}_i^T\vect{P}_{i-1} \label{eq:P_downdate} \\
    \hvect{K}_i &= \vect{P}_{i-1}\hvect{H}_i(\hvect{R}_i + \hvect{H}_i^T\vect{P}_{i-1}\hvect{H}_i)^{-1} \label{eq:Ki_downdate}
\end{align}
\end{subequations}
where
\begin{equation} \label{eq:step1_H_Z_R}
    \begin{aligned}
        \hvect{Z}_i &= [\vect{G}_{i-1} \ \vect{Y}_{v,i}^-], \ \hvect{H}_i = [\vect{A}(1) \ \vect{\Phi}_{v,i}^-], \\ 
        \hvect{R}_i &= -\epsilon \vect{I}_{3+b_i^-}
    \end{aligned}
\end{equation}
with $\epsilon \approx 0^+$
and $\vect{Y}_{v,i}^- \in \mathbb{R}^{n \times b_i^-}$ containing in columns the via-points that exist in the columns of $\vect{Y}_{v,i-1}$ and not in $\vect{Y}_{v,i}$ (i.e. are present at time-step $i-1$ and not at $i$)\footnote{Such is the case for instance if they are defined relative to a varying target or an object, whose position has changed or has been completely removed.} and $\vect{\Phi}_{v,i}^- = [\vphi(s_k)]_{k=1:b_i^-}$ with $s_k$ being the phase corresponding to each via-point in $\vect{Y}_{v,i}^-$.
\\\textbf{step 2}: Update to new target and via-points:
\begin{subequations} \label{eq:update}
\begin{align}
    \vect{W}_{i} &= \hvect{W}_i + \vect{K}_i(\vect{Z}_i - \hvect{W}_{i}^T\vect{H}_i)^T \label{eq:w_update} \\
    \vect{P}_{i} &= \hvect{P}_{i} - \vect{K}_i\vect{H}_i^T\hvect{P}_{i} \label{eq:P_update} \\
    \vect{K}_i &= \hvect{P}_{i}\vect{H}_i(\vect{R}_i + \vect{H}_i^T\hvect{P}_{i}\vect{H}_i)^{-1} \label{eq:K_i}
\end{align}
\end{subequations}
where
\begin{equation} \label{eq:step2_H_Z_R}
    \begin{aligned}
        \vect{Z}_i &= [\vect{Y}_i \ \vect{G}_i \ \vect{Y}_{v,i}^+], \ \vect{H}_i = [\vect{C}_i \ \vect{A}(1) \ \vect{\Phi}_{v,i}^+], \\ 
        \vect{R}_i &= \epsilon \ \vect{I}_{6 + b_i^+}
    \end{aligned}
\end{equation}
with $\vect{Y}_{v,i}^+ \in \mathbb{R}^{n \times b_i^+}$ containing in columns the via-points that are present at time-step $i$ and not at $i-1$ and $\vect{\Phi}_{v,i}^+ = [\vphi(s_k)]_{k=1:b_i^+}$ with $s_k$ being the phase corresponding to each via-point in $\vect{Y}_{v,i}^+$.

For initialization at step $i=0$ we set $\vect{Z}_0 = \begin{bmatrix} \vect{Y}_0 & \vect{G}_0 \end{bmatrix}$, $\vect{H}_0 = \begin{bmatrix} \vect{A}(0) & \vect{A}(1)\end{bmatrix}$, $\vect{R}_0 = \epsilon \vect{I}_6$, $\hvect{W}_0 = \vect{W}_0$ and $\hvect{P}_0 = \vect{P}_0$ where 
\begin{align*}
    \vect{W}_0 &= \scalemath{0.94}{\text{argmin}_{\vect{W}} J(\vect{W}^T \vphi(x), \vect{y}_d)} \\
    \vect{P}_0 &= \scalemath{0.9}{\left(\sum_{j=1}^m \partDerSq{\vphi_{j}}{s} \partDerSq{\vphi_{j}}{s}^T \right)^{-1}}
\end{align*}

\begin{remark} \label{remark:W_0}
    At the beginning of each execution, the DMP weights are initialized to the values that have been calculated offline once during the DMP training from the demonstration.
    Alternatively, if at a previous execution the DMP was modified, e.g. from via-points or the robot's actual state,
    resulting to some final weights $\vect{W}_f$, and it is desirable to retain this adjusted pattern at subsequent executions, one could set $\vect{W}_0 = \vect{W}_f$.
\end{remark}

\begin{remark} \label{remark:const_g}
    When $\vect{g}$ is constant and/or we don't wish to adapt the DMP trajectory to via-points or changes of the actual robot's state, we can drop the constraints in \eqref{eq:vp_constr}, \eqref{eq:current_state_constr}. It suffices to carry the above update only once at the beginning with $\vect{Z}_0$, $\vect{H}_0$ and $\vect{R}_0$ to obtain $\vect{W}_1$ and use $\vect{W}_i = \vect{W}_1$ for $i>1$.
\end{remark}

\begin{remark} \label{remark:choice_of_epsilon}
    In practice, due to finite numerical precision, setting $\epsilon$ in $\hvect{R}_i$, $\vect{R}_i$ very close to zero can make the matrix being inverted in \eqref{eq:Ki_downdate} and \eqref{eq:K_i} ill-conditioned. We have found that in practice choosing $\epsilon$ approximately in the range $(10^{-10}, \ 10^{-6})$ does not create any numerical issues and the constraints are satisfied within an error tolerance of the order of $10^{-4}$ or even less depending on how small $\epsilon$ is.
\end{remark}

\begin{remark} \label{remark:infeasibility_epsilon}
    Despite using the same value $\epsilon$ in the above analysis for simplicity, it is easy to verify that different values of $\epsilon$ can be chosen for each type of constraint. 
    The selected $\epsilon$ value can facilitate finding a solution in the optimization problem \eqref{eq:opt_prob} in case constraints cannot be strictly satisfied  (i.e. infinite accuracy). Then, 
    the value of $\epsilon$ essentially relaxes equality constraints, by  penalizing the cost function (see \theoref{theo:LS_equivalence} from Appendix B). Choosing different values of $\epsilon$ for each type of constraint, allows to prioritize which of these constraints should be satisfied with greater accuracy.
    Moreover, when adapting to the actual robot's state, which may be perturbed by noisy external signals, higher values of $\epsilon$ can be used to suppress that noise.
\end{remark}

\begin{remark} \label{remark:obj_cost}
    The proposed recursive algorithm for updating the DMP weights to dynamically changing targets and/or via-points while ensuring a smooth trajectory generation can also be employed with other objective functions as well.
    For instance, it can be easily verified that optimizing the position (instead of the acceleration) results in the same equations with $\vect{P}_0 = \scalemath{0.9}{(\sum_{j=1}^m \vphi_{j}\vphi_{j}^T )^{-1}}$. More general objective functions can also be specified, like the one from \cite{DMP_via_opt_2015}, which translates to defining $\vect{P}_0 = \scalemath{0.9}{\sum_{j=1}^m (\vphi_{j} \vect{M} \vphi_{j}^T )^{-1}}$, where $\vect{M}$ is a metric matrix that is learned from multiple demonstrations.
\end{remark}

\update{
\begin{remark} \label{remark:opt_with_other_scaling}
    The proposed recursive algorithm to include dynamic via-points can also be applied with the spatial scaling from \cite{Ijspeert2013}, \cite{Koutras_DMP_scaling} or \cite{Mollifier_bio_DMP_2021}. 
    In particular, one can use  $\vect{y}_{\phVar}$ as defined in \eqref{eq:y_x_vec} with the corresponding scaling matrix $\vect{K}_s$, resulting from \cite{Ijspeert2013}, \cite{Koutras_DMP_scaling} or \cite{Mollifier_bio_DMP_2021}, and employ the same recursive updates \eqref{eq:downdate}, \eqref{eq:update} by replacing $\vect{W}$ with $\vect{K}_s \vect{W}$ and transforming each position $\vect{y}$ (be it the current position, target, via-point etc.) according to $\vect{y} := \vect{y} - \vect{y}_0 + \vect{K}_s \hvect{y}_{d,0}$.
    This can also be combined with a different objective cost as explained in Remark \ref{remark:obj_cost}.
\end{remark}
}

\subsection{Derivation/Proof of the weights adaptation} \label{subsec:prob_proof}

\noindent In the following, we prove that \eqref{eq:downdate}, \eqref{eq:update} solve the initial optimization problem given in \eqref{eq:opt_prob}. To this end, we will utilize the theorems provided in Appendix B.
Problem \eqref{eq:opt_prob} can be written compactly as:
\begin{align}
    \text{min}_{\vect{W}} & \text{tr}\{ (\ddhvect{Y}_{d} - \vect{W}^T \ddvect{\Phi})^T (\ddhvect{Y}_{d} - \vect{W}^T \ddvect{\Phi}) \} \label{eq:opt_prob_compact} \\
    \text{s.t.} \quad & \vect{W}^T \bvect{H}_i = \bvect{Z}_i \nonumber
\end{align}
where
$\ddhvect{Y}_{d} = [\partDerSq{\hvect{y}_{d,1}}{s} \ ... \ \partDerSq{\hvect{y}_{d,m}}{s}] \in \mathbb{R}^{n \times m}$, $\ddvect{\Phi} = [\partDerSq{\vphi_{1}}{s} \ ... \ \partDerSq{\vphi_{m}}{s}] \in \mathbb{R}^{K \times m}$,
$\bvect{Z}_i = \begin{bmatrix} \vect{Y}_0 & \vect{G}_i & \vect{Y}_{1:i} & \vect{Y}_{v,i} \end{bmatrix} \in \mathbb{R}^{n \times m_1}$ and $\bvect{H}_i = \begin{bmatrix} \vect{A}(0) & \vect{A}(1) & \vect{C}_{1:i} & \vPhi_{v,i} \end{bmatrix} \in \mathbb{R}^{K \times m_1}$ with $m_1 = 6 + 3i + L_i$.
Based on \theoref{theo:LS_eq_constr} problem \eqref{eq:opt_prob_compact} is equivalent to:
\begin{equation} \label{eq:opt_prob_compact_equiv}
\begin{aligned} 
    &\text{min}_{\vect{W}} f_i(\vect{W}) \triangleq \text{tr} \{ (\ddhvect{Y}_{d} - \vect{W}^T \ddvect{\Phi} )^T (\ddhvect{Y}_{d} - \vect{W}^T \ddvect{\Phi} )\}  \\
    &\ \qquad \ + \text{tr} \{(\bvect{Z}_i - \vect{W}^T \bvect{H}_i)^T \bvect{R}_i^{-1} (\bvect{Z}_i - \vect{W}^T \bvect{H}_i) \} 
\end{aligned}
\end{equation}
for $\bvect{R}_i = \epsilon \ \vect{I}_{m_1}, \ \epsilon \approx 0^+$.
Given now the solution at timestep $i-1$ for problem \eqref{eq:opt_prob_compact_equiv} we want to find the solution for timestep $i$, for which the cost function can be written as:
\begin{align}
    f_i = \hat{f}_{i-1} + \text{tr} \{(\vect{Z}_i - \vect{W}^T \vect{H}_i)^T \vect{R}_i^{-1} (\vect{Z}_i - \vect{W}^T \vect{H}_i) \} \nonumber
\end{align}
where $\hat{f}_{i-1} = f_{i-1} - \text{tr} \{(\hvect{Z}_i - \vect{W}^T \hvect{H}_i)^T \hvect{R}_i^{-1} (\hvect{Z}_i - \vect{W}^T \hvect{H}_i) \}$.
Invoking \theoref{theo:LS_downdate} the solution to $\hat{f}_{i-1}$ is given by \eqref{eq:downdate}. Given now $\hvect{W}_i, \hvect{P}_i$ for $\hat{f}_{i-1}$ and using \theoref{theo:LS_equivalence} the solution to $f_i$ is equivalent to the solution of $\trace{(\vect{W}-\hvect{W}_i)^T\hvect{P}_i (\vect{W}-\hvect{W}_i) } + \text{tr} \{(\vect{Z}_i - \vect{W}^T \vect{H}_i)^T \vect{R}_i^{-1} (\vect{Z}_i - \vect{W}^T \vect{H}_i) \}$, which based on \theoref{theo:LS_update} is given by \eqref{eq:update}.

\section{Simulations} \label{sec:Simulations}

\noindent In the following we compare the DMP produced with the proposed spatial generalization, henceforth referred as DMP$^{++}$, with the classical DMP, as well as other SoA variations that have been proposed in the literature.
We further demonstrate and comment on the effects of adaptation to dynamic via-points and also including kinematic limits (inequality constraints) using the proposed generalization with \cite{Antosidi_DMP_constr_2022}.  
Unless stated otherwise, in all cases, based on \remarkref{remark:infeasibility_epsilon} and in the spirit that higher priority should be placed on the initial and final position constraints followed by the via-point constraints and then the previous state constraints, we chose $\epsilon$ $10^{-9}$ for the initial and final position, $10^{-7}$ for the initial and final velocity and acceleration constraints, $10^{-7}$ for via-points and $10^{-6}$, $10^{-6}$, $10^{-4}$, for the previous state constraint. Notice that violation of the previous state constraint generates a discontinuous acceleration in the DMP \eqref{eq:GMP_tf_sys}, which will be larger the bigger the error in the equality constraint is.
For via-points, the phase variable assigned at a via-point $\vect{y}_v$ is determined as $s_v = \text{argmin}_{s_k}\norm{\vect{y}_s(s_k) - \vect{y}_v}$, where $s_k$ in uniformly sampled in $(s, 1]$ taking $80$ points, where $s$ is the current value of the phase variable.

\subsection{Comparison with classical DMP}


\noindent We start off by examining the three basic cases where the classical DMP scaling is problematic \cite{Bio_DMP_2009}. For simplicity we consider an $1$ DoF demonstration  starting at position zero. Simulation results for each case are plotted in Fig. \ref{fig:classic_scale_drawbacks}, where the demo is plotted with green dashed line, the proposed DMP$^{++}$ with blue and the classical DMP with magenta dotted line.
In the first case (Fig. \ref{fig:classic_scale_drawbacks}, top subplot) the demonstrated target $g_d$ is quite close to   the initial position $y_{0,d}=0$. In this case, even a new target that is close to that of the demo, results in over-scaling for the classical DMP. In the figure, the resulting trajectory for the classical DMP is scaled by $400$ for visualization purposes (the position actually converges to $g$ but in the plot due to the scaling by $400$ it is $g/400 \approx 0$). In the second case (Fig. \ref{fig:classic_scale_drawbacks}, middle subplot) the demo initial and target positions are different but  
the new target is the same with the initial position
i.e. $g=y_0$. In this case, no motion is generated by the classical DMP. Finally, in the third case (Fig. \ref{fig:classic_scale_drawbacks}, bottom subplot), if the new target is chosen below the initial position, or more generally when $\text{sign}(g-y_0) = -\text{sign}(g_d-y_{0,d})$, the classical DMP produces a mirrored trajectory. Unlike the classical DMP, the proposed DMP$^{++}$ retains the shape of the demonstrated trajectory in all cases, without exhibiting any side-effects.

\begin{figure}[!ht]
    \centering
    \includegraphics[scale=0.57]{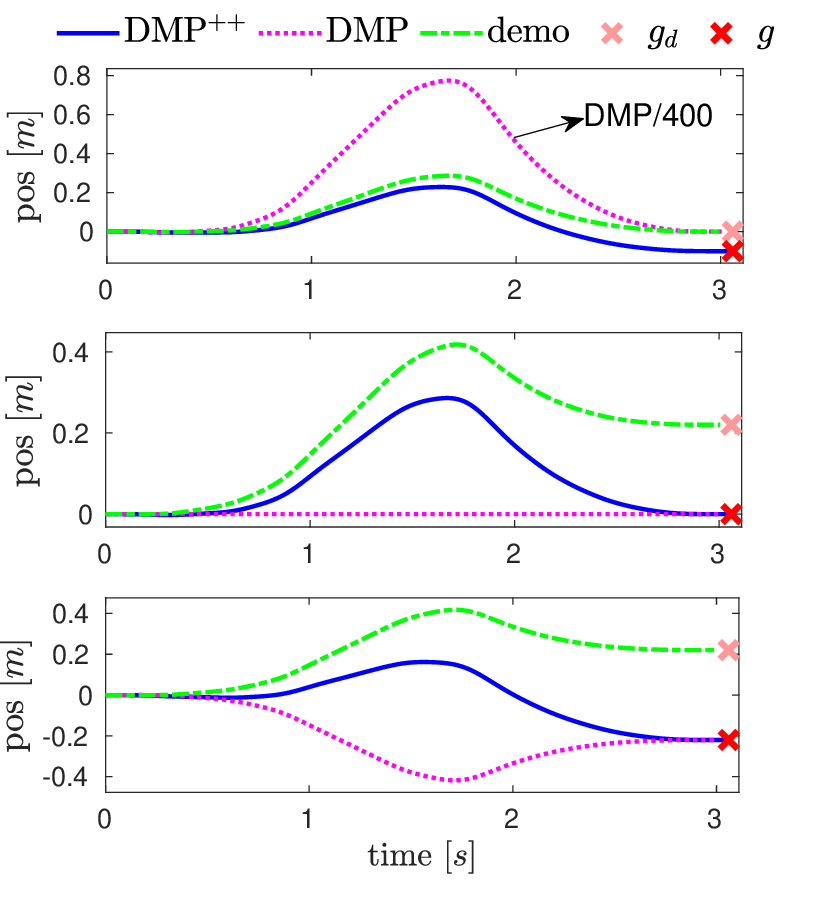}
    \caption{Comparison with classical DMP generalization. Top: The demo start and goal positions are close. Middle: New goal set at the start position. Bottom: Mirroring.}
    \label{fig:classic_scale_drawbacks}
\end{figure}

\begin{figure*}[!ht]
    \centering
    \includegraphics[scale=0.42]{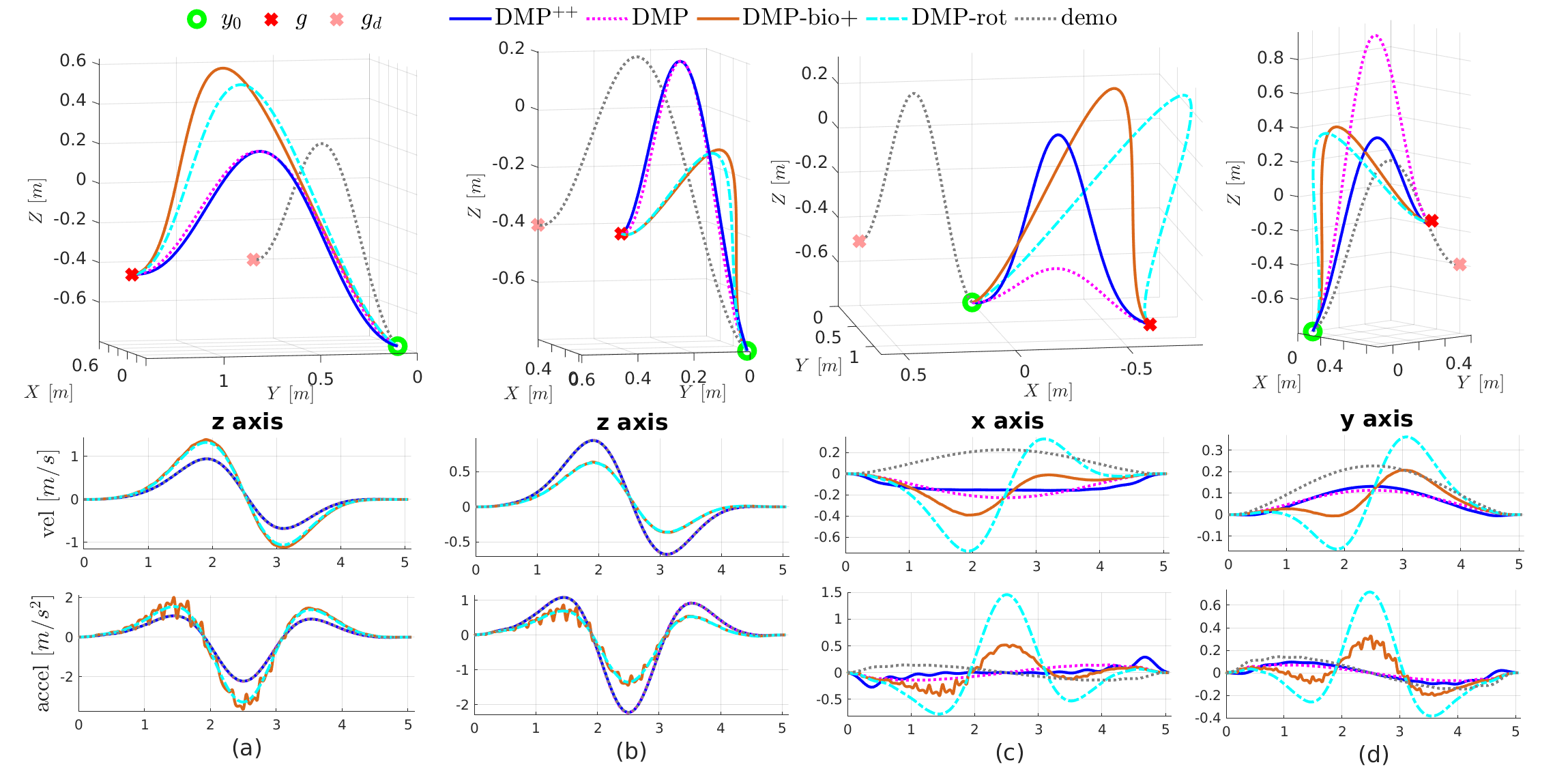}
    \caption{Spatial generalization comparison between the proposed DMP$^{++}$ and other DMP variants, for displacements of the new target $\vect{g}$ different from the demo target $\vect{g}_d$. In each column the top subplot depicts the $3D$ paths and the two bottom subplots the corresponding velocities and accelerations for an indicative axis. (a)/(b): Same $z$ and bigger/smaller displacement on the $xy$ plane from $y_0$. (c)/(d) Displacement in all $3$ axis, with lower/higher $z$.}
    \label{fig:compare_scalings}
\end{figure*}

\subsection{Comparison with other DMP variants}

\noindent We further compare the proposed generalization against other SoA DMP variants that propose a different generalization. In particular, we compare against \cite{Koutras_DMP_scaling} (we will call it DMP-rot), which proposes a different spatial scaling that remedies the $3$ aforementioned drawbacks of classical DMP. We also compare against the bio-inspired DMP variant from \cite{Mollifier_bio_DMP_2021} (we will call it DMP-bio$^+$), which remedies the scaling problem of the bio-inspired DMP \cite{Bio_DMP_2009} for new target positions that are not close to the demo target position.

Notice that both \cite{Koutras_DMP_scaling}, that assumes the classical DMP formulation, and \cite{Mollifier_bio_DMP_2021}, that assumes the bio-inspired DMP formulation, use for spatial generalization the scaling matrix $\vect{K}_s = \vect{R} \frac{\norm{\vect{g} - \vect{y}_0}}{\norm{\vect{g}_d - \vect{y}_{d,0}}}$, where $\vect{R} \in SO(3)$ is such that $\frac{\vect{g} - \vect{y}_0}{\norm{\vect{g} - \vect{y}_0}} = \vect{R} \frac{\vect{g}_d - \vect{y}_{d,0}}{\norm{\vect{g}_d - \vect{y}_{d,0}}}$. It becomes obvious from such a $\vect{K}_s$, and in particular from $\frac{\norm{\vect{g} - \vect{y}_0}}{\norm{\vect{g}_d - \vect{y}_{d,0}}}$, that displacement in one axis can affect the scaling in all axes, which in turn can result in over-amplification of the amplitude of an axis even if there is no displacement there. This effect is more pronounced for trajectories where in a specific axis the amplitude of the demo (i.e. the distance between the min and max position) is bigger than the distance between the original and final position. Such trajectories are typical in many real practical scenarios like packing.

To highlight these issues, we consider for the comparisons such a trajectory, with a higher amplitude than the distance between the initial and final demo position and examine the spatial generalization for $4$ different displacements of the new target $\vect{g}$ from $\vect{g}_d$.
The results are plotted in Fig. \ref{fig:compare_scalings}, where each column depicts the results of a target displacement, with the $3D$ paths of each DMP variant on top and the corresponding velocities and acceleration of an indicative axis at the bottom. In the first case (Fig. \ref{fig:compare_scalings}-(a)), we can observe that although both $\vect{g}$ and $\vect{g}_d$ have the same $z$, the bigger displacement of $\vect{g}$ from $\vect{y}_0$ on the $xy$ plane results in a large scaling for DMP-bio$^+$ and DMP-rot. This behaviour in practical pick and place tasks is arguably undesirable. For instance, when changing the $xy$ position of a box where an object is to be placed, but the height of the box is the same, there is no good reason why the height of the DMP trajectory should alter. Moreover, this magnification of the amplitude generates much larger velocities and accelerations, as can be seen in the bottom subplots of \ref{fig:compare_scalings}-(a)\footnote{Note that DMP-bio$^+$ produces a jerky acceleration which is owning to the use of the mollifier kernel which have finite support (instead of Gaussian kernels) \cite{Mollifier_bio_DMP_2021}.}. 
In contrast, DMP$^{++}$ does not alter the amplitude of the trajectory and exhibits a behaviour close to the classical DMP. Notice also that the velocities and acceleration of DMP$^{++}$ and DMP coincide with the demo, as one would expect given that the height of the target is unaltered. Analogous conclusions can be drawn in \ref{fig:compare_scalings}-(b), where the height of the target is the same, but this time the displacement on the $xy$ plane is smaller. Finally, in \ref{fig:compare_scalings}-(c),(d) the target is displaced in all axes, with lower or higher $z$ respectively for $\vect{g}$. Here, the scaling drawbacks of the classical DMP are clearly manifested. Notice also, that although DMP-rot and DMP-bio$^+$ do not generate large scalings and the resulting path retains the demo shape, it is nevertheless rotated in such a way that such a behaviour would not be very natural in practical tasks. (e.g. in a pick and place).

\subsection{Generalization to dynamic via-points}

Here we test the spatial generalization in the presence of dynamic via-points. We consider a single $2$D demonstration for placing an object inside a box. During simulation we consider a tighter box that is also higher than that of the demonstration, hence via-points are used to ensure the proper placement inside the box. A rectangular obstacle is also present in the scene, with via-points specified relative to it, so as to ensure its avoidance in a predictable manner. To emulate dynamic changes in the scene, which typical occur in real environments, the position of the box and the target are displaced at different time instances.

\begin{figure}[!t]
    \centering
    \includegraphics[scale=0.56]{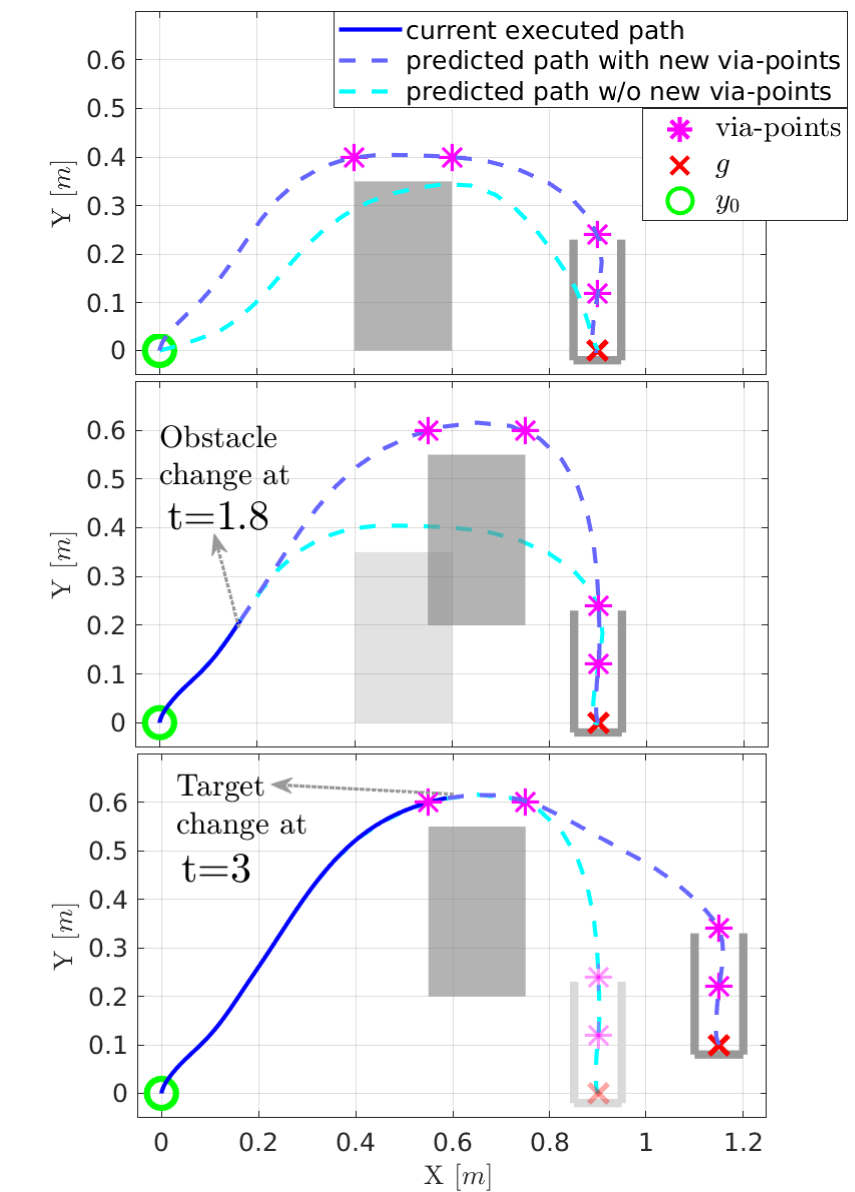}
    \caption{DMP$^{++}$ simulation with dynamic via-points. $1$st subplot: Initial predicted DMP path (i.e. the one that would be produced) with and w/o the via-points. $2$nd subplot: The obstacle is displaced and the current executed DMP path as well as the previous and updated predicted DMP path w/o and with the new via-points is shown. $3$rd subplot: The target is displaced.}
    \label{fig:vp_sim}
\end{figure}

\begin{figure}[!t]
    \centering
    \includegraphics[scale=0.4]{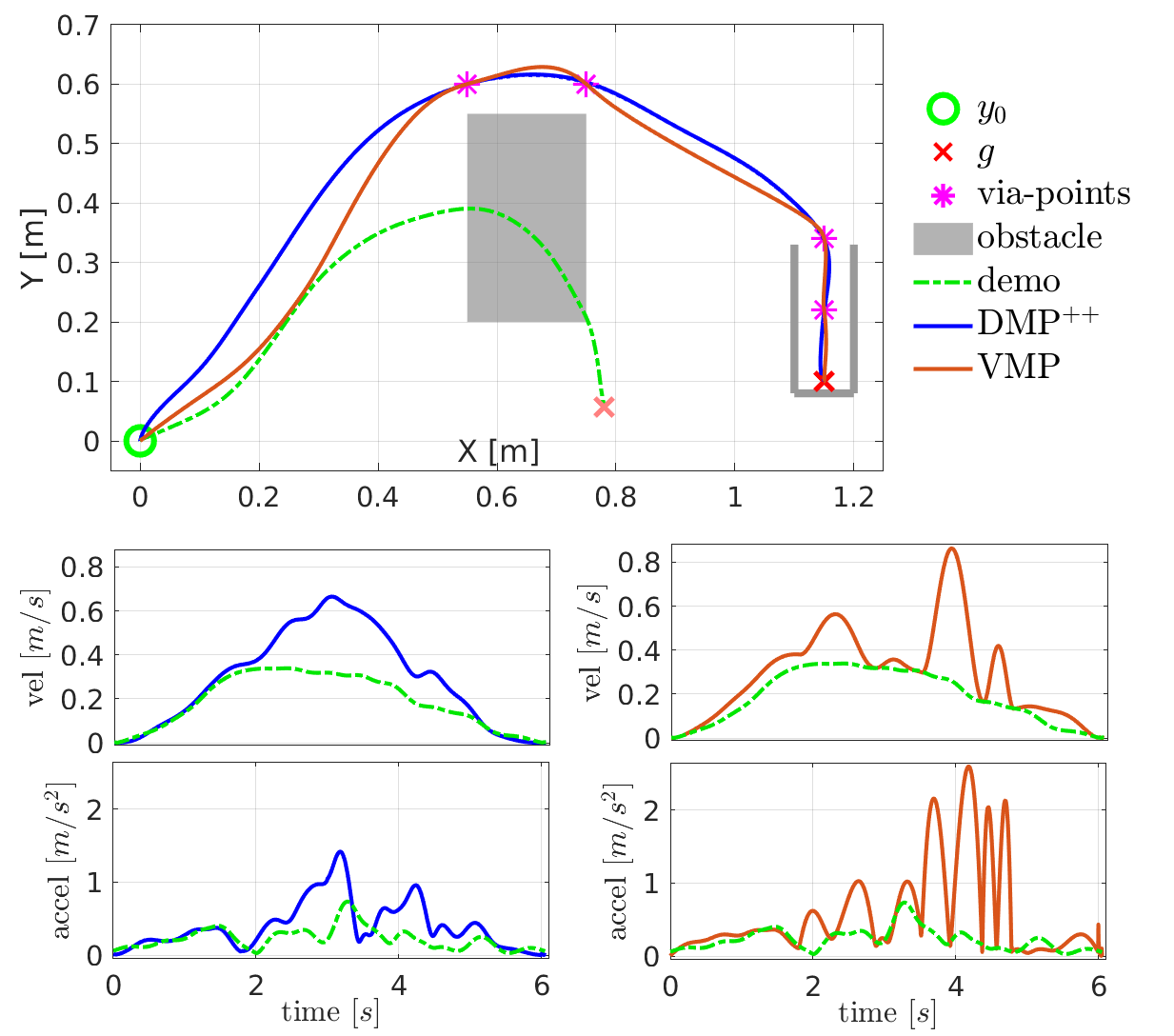}
    \caption{DMP$^{++}$ vs VMP. Top plot: $2$D paths. Bottom plots: velocity and acceleration L$2$ norms.}
    \label{fig:vp_comp}
\end{figure}

We compare DMP$^{++}$ with VMP \cite{VMP_2019} 
\footnote{Other methods like \cite{DMP_plus_2016} and \cite{Mollifier_bio_DMP_2021} are able to adapt only to a new continuous trajectory segment and not via-points. Also, ProMP \cite{ProMPs_Paraschos_2018} and KMP \cite{KMP_2019} require multiple demonstrations and do not address the case of dynamic via-points.}.
VMP are tested in \cite{VMP_2019} only with static via-points, but can be adapted to dynamic via-points too. Specifically, a VMP is given by:
\begin{equation}
    \vect{y}(\phVar) = \vect{f}_p(\phVar) + \vect{A}^T \vphi_2(\phVar)
\end{equation}
where $\vect{f}_p(\phVar)$ is given by \eqref{eq:fp_vec}, $\vect{A} \in \mathbb{R}^{6 \times n}$ contains in each column the coefficients of a $5$th order polynomial for each DoF and $\vphi_2(\phVar) = [1 \ \phVar \ \phVar^2 \ \phVar^3 \ \phVar^4 \ \phVar^5]^T$. The weights $\vect{W}$ in $\vect{f}_p$ are fitted to the demo, while $\vect{A}$ is adapted according to the via-points, with the initial and target position treated as the first and final via-point. In particular, given the current phase value $s$, and the previous and next via-points $\vect{y}_v(\phVar_1)$, $\vect{y}_v(\phVar_2)$ with $\phVar_1 \le \phVar < \phVar_2$, $\vect{A}$ is determined by solving the system of equations:
\begin{align*}
    \begin{bmatrix}
    \vphi_2(\phVar_j)^T \\ \dvect{\phi}_2(\phVar_j)^T \\ \ddvect{\phi}_2(\phVar_j)^T
    \end{bmatrix}
    \vect{A}
    =
    \begin{bmatrix}
    (\vect{y}_v(\phVar_j) - \vect{f}_p(\phVar_j))^T \\ (\dvect{y}(\phVar_j) - \dvect{f}_p(\phVar_j))^T \\ (\ddvect{y}(\phVar_j) - \ddvect{f}_p(\phVar_j))^T
    \end{bmatrix} \ , \ j \in \{1, 2\}
\end{align*}
Similar to \cite{VMP_2019}, we set the velocity and acceleration at each via-point equal to that of the demonstration. Since via-points may alter on the fly, to ensure continuity we consider as the first via-point the current VMP point, i.e. we set $\phVar_1 = s$ and $\vect{y}_v(\phVar_1) = \vect{y}(\phVar)$ (except for the initial via-point, which has position $\vect{y}_0$). 
If the via-points are static, this approach generates the same trajectory as in \cite{VMP_2019}, since once $\vect{A}$ is adapted from a via-point $\vect{y}_{v}(\phVar_1)$ to a next via-point $\vect{y}_v(\phVar_2)$, then continuously updating $\vect{A}$ between $\vect{y}_{v}(\phVar) = \vect{y}(\phVar)$ and $\vect{y}_v(\phVar_2)$, with $\phVar \in (\phVar_1, \ \phVar_2)$ obviously yields the same $\vect{A}$. 

The simulated scenario and the results with DMP$^{++}$ are depicted in Fig. \ref{fig:vp_sim}. At the top subplot, the initial target and obstacle are shown, with the predicted DMP trajectory (i.e. the one that would be produced) without the via-points plotted with dashed cyan line, and after the adaptation to the via-points with light blue dashed line.
At $t=1.8$ sec, the obstacle is abruptly displaced (Fig. \ref{fig:vp_sim}, middle subplot), and the previous predicted DMP path and the new predicted DMP path, based on the updated via-points, are plotted again. The DMP path that has been executed so far is also shown with solid blue line. Finally, at $t=3$ sec, the target is abruptly displaced (Fig. \ref{fig:vp_sim}, bottom subplot).
In all simulation snapshots, comparing the cyan with the light blue dashed line, it's obvious that without updating the DMP to the new via-points, either a collision with the obstacle would occur or failure to reach the target due to bumping at the box's boundaries. 

We carried out the same simulation with VMP and plot the final results against the DMP$^{++}$ in Fig. \ref{fig:vp_comp}, where on the top subplot the $2$D paths are drawn and on the bottom subplots the velocity and acceleration norms against the demonstrated ones. It can be observed from the velocities and accelerations that VMP generates more abrupt and higher velocities and accelerations, that differ significantly compared to the demo. This shortcoming is due to the fact that the VMP adaptation 
essentially overrides the demo with the only criterion being the satisfaction of the two currently active via-point constraints. In contrast, DMP$^{++}$ performs a global optimization (see \eqref{eq:opt_prob}) that considers all via-points and adapts in a way as consistent as possible with the demo.

\subsection{Adding kinematic inequality constraints}

\noindent Here we compare
with the framework from \cite{Antosidi_DMP_constr_2022} which uses the DMP formulation \cite{Antosidi_Rev_DMP} with the classical DMP scaling to generate the generalized trajectory $\{\vect{y}_d, \dvect{y}_d, \ddvect{y}_d\}_i$ over a horizon of $N$ future time-steps which is then optimized to respect kinematic limits like position velocity and acceleration bounds as well as passing from via-points. We show that by modifying the framework from \cite{Antosidi_DMP_constr_2022} according to Fig. \ref{fig:dmp_star}, where the DMP with the classical scaling is replaced by DMP$^{++}$ we achieve more efficient generalization that accounts also for via-points.
Notice that via-points are included both in DMP$^{++}$  and in the optimization module of \cite{Antosidi_DMP_constr_2022} (Fig. \ref{fig:dmp_star} brown rectangle) via the respective equality constraints
so as to guarantee that the constrained within the kinematic inequality limits trajectory will also pass from the via-points.

\begin{figure}[!t]
    \centering
    \includegraphics[scale=0.135]{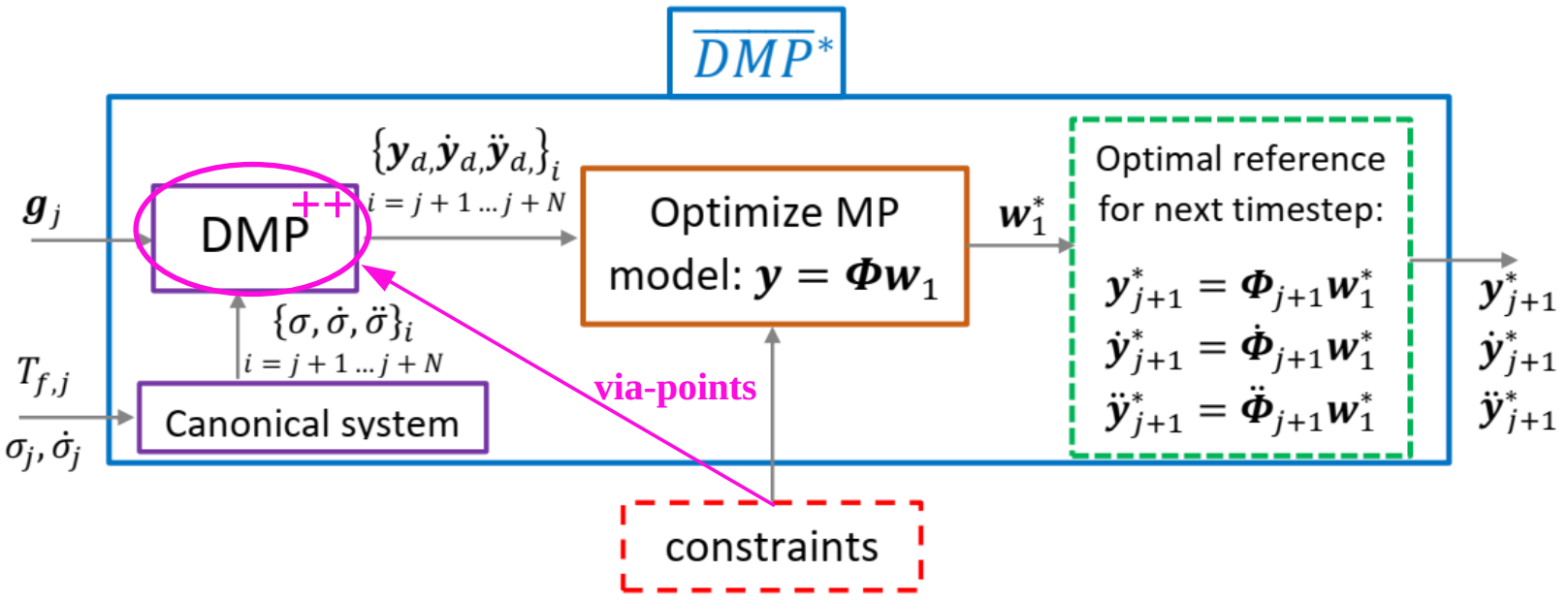}
    \caption{Combination of the proposed spatial generalization with \cite{Antosidi_DMP_constr_2022} to further impose kinematic inequality constraints. The image is taken from \cite{Antosidi_DMP_constr_2022}, where the differences/additions are highlighted with magenta. DMP$^{++}$ replaces the DMP with the classical spatial scaling. Via-points are also considered during the generation of the generalized (unconstrained) trajectory $\vect{y}_d$.}
    \label{fig:dmp_star}
\end{figure}

Simulation results of a $1$ DoF example with two via points and position, velocity and acceleration limits  are shown in Fig. \ref{fig:ineq_constr}, where the method proposed in \cite{Antosidi_DMP_constr_2022} is used to optimize the velocity profile with all relevant parameters chosen as in \cite{Antosidi_DMP_constr_2022}. 
To endow the optimizer with greater flexibility in finding feasible solutions we consider the kinematic limits as hard limits (grey dashed lines) and introduce the lower and upper soft limits (magenta dashed lines) within the hard limits. We want to preferably operate within the soft limits and only exceed them if feasibility would be inevitable otherwise \cite{Antosidi_DMP_constr_2022}.
It can be observed that the DMP$^{++}$ (blue line) retains the shape of the demonstration (green dash dotted line), even in the presence of kinematic limits and via-points (red asterisks). On the other hand, the DMP with the classical scaling (mustard dotted line) can induce large scalings that generate velocities and accelerations that violate considerably the limits, leading to saturation (see the $2$nd and $3$rd subplots of Fig. \ref{fig:ineq_constr}) and consequently the distortion of the demonstrated shape, as can be observed comparing the mustard with the green trajectory in the first subplot. This distortion would be even more adverse for cases in which the scaling is problematic like those presented in Fig. \ref{fig:classic_scale_drawbacks}, where a feasible solution may even not be found.

\begin{figure}[!t]
    \centering
    \includegraphics[scale=0.46]{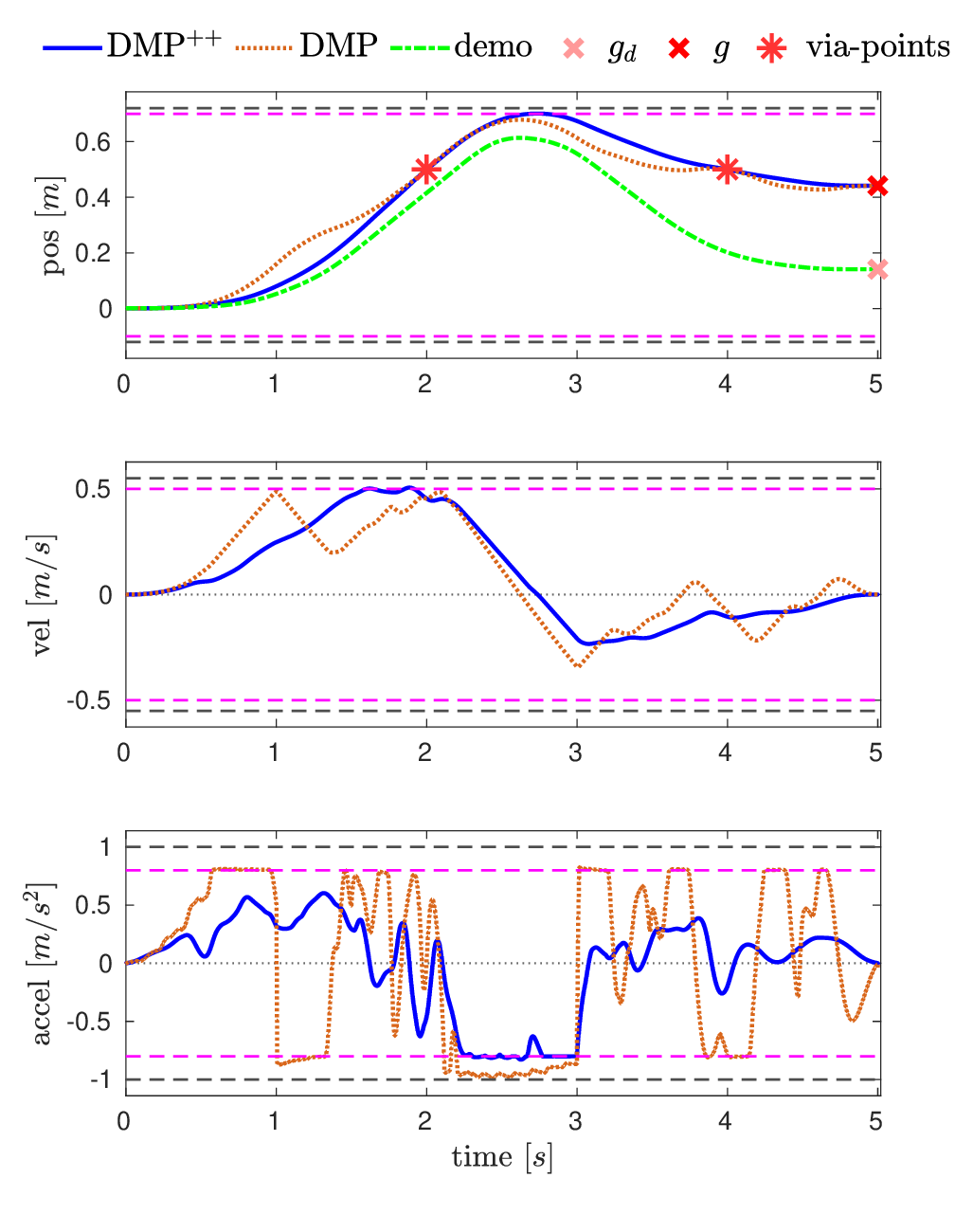}
    \caption{Comparison of the classical DMP spatial generalization against the proposed DMP$^{++}$ in combination with \cite{Antosidi_DMP_constr_2022} to enforce kinematic inequality constraints.}
    \label{fig:ineq_constr}
\end{figure}


\section{Experimental Validation} \label{sec:Experiments}

In this section, we further validate and demonstrate  experimentally the efficiency of DMP$^{++}$ in  practical pick and place tasks involved in packing scenarios. In particular we showcase our method's spatial generalization properties in a dynamic environment and incorporation of dynamic via-points.

\subsection{Packing scenario}

\noindent 
The robot has to pick carton products and place them inside boxes, where the size of the product and associated packing box can vary. A single demonstration is provided with a small carton ($5\times12\times5$ cm) and a short box ($10$ cm height) (Fig. \ref{fig:demo}). This demonstration is encoded in a DMP.
Then we use this DMP to execute three scenarios: 1) packing the last carton product used in the demo in a dynamically changed  box pose, 2) packing a larger carton product ($12\times12\times20$ cm) to its associated box ($28$ cm height) and 3) the latter scenario in the presence of dynamic obstacles. In all three scenarios, 
depending on the height of the carton and box, via-points are specified above the target pose to ensure a proper insertion of the carton in the box. Moreover, in all cases the box's target pose is altered on the fly by a human, thus the associated target via-points change also dynamically. In the $3$rd scenario, via-points are specified w.r.t. the obstacle to avoid it in a predictable way. As the obstacle is introduced dynamically, the associated via-points also change dynamically.


\subsection{Technical details}

To carry out these experiments a velocity controlled ur5e robot is used, with $2$ ms control cycle, which takes as reference the velocity produced by the DMP transformation system from \eqref{eq:GMP_tf_sys}, where $\vect{y}$ is either the Cartesian position or for orientation $\vect{y} = \log(\vect{Q}*\bvect{Q}_0)$. 
The DMP weights are updated online according to \eqref{eq:downdate}, \eqref{eq:update}, while $\epsilon$ is set as in the simulations. 
The current state constraint in the optimization is set equal to the last state generated by DMP$^{++}$. 
In all cases, the optimization at each control cycle was below $0.8$ ms,
which is well within the $2$ms control cycle of the robot.
For the canonical system from \eqref{eq:GMP_can_sys},
we use $\dot{\phVar}_d = 1/T_f$, with $T_f=10$ sec.

\begin{figure}[!htbp]
    \centering

    \begin{subfigure}[b]{\linewidth}
        \centering
        \includegraphics[width=\linewidth]{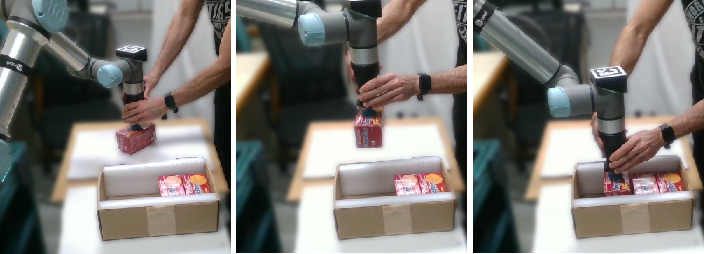}
        \caption{Demonstration: Placing a carton inside a box.}
        \label{fig:demo}
    \end{subfigure}%

    \begin{subfigure}[b]{\linewidth}
        \centering
        \includegraphics[width=\linewidth]{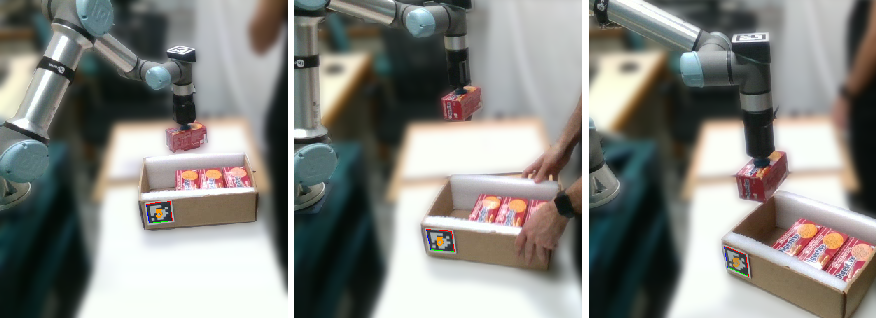}
        \caption{Scenario 1: small carton/box}
        \label{fig:c1_snaps}
    \end{subfigure}%

    \begin{subfigure}[b]{\linewidth}
        \centering
        \includegraphics[width=\linewidth]{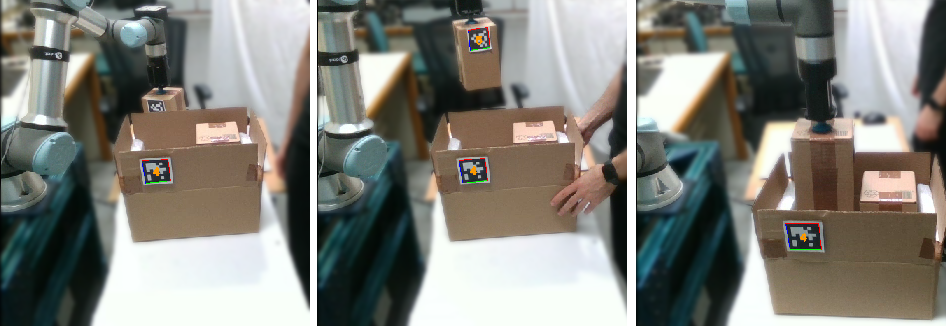}
        \caption{Scenario 2: large carton/box}
        \label{fig:c2_snaps}
    \end{subfigure}%

    \begin{subfigure}[b]{\linewidth}
        \centering
        \includegraphics[width=\linewidth]{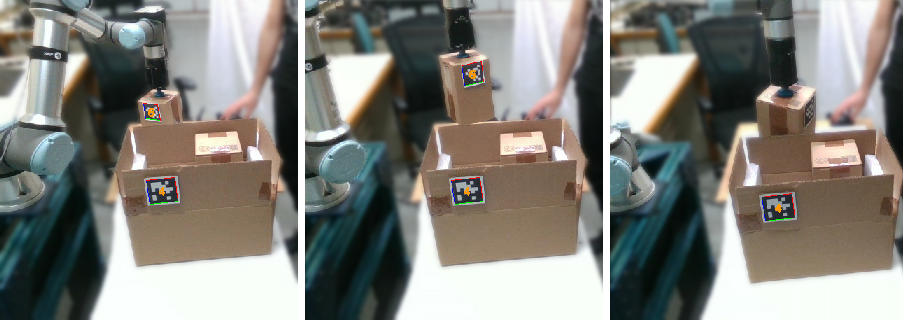}
        \caption{Scenario 2: large carton/box (NO via-points)}
        \label{fig:c0_snaps}
    \end{subfigure}%

    \begin{subfigure}[b]{\linewidth}
        \centering
        \includegraphics[width=\linewidth]{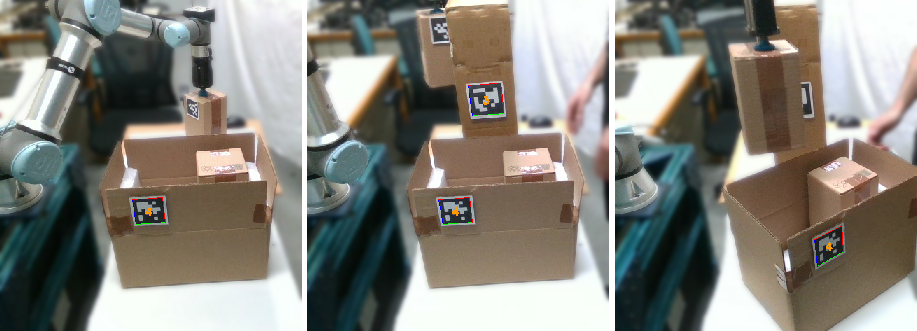}
        \caption{Scenario 3: large carton/box $+$ obstacle.}
        \label{fig:c3_snaps}
    \end{subfigure}%
    
    \caption{Demonstration and execution snapshots.}
    \label{fig:exp_snaps}
\end{figure}

The pose of all object's that are employed in the experiments are defined by apriltags which are tracked using a realsense2 camera at $30$ Hz. In general the target pose where the carton should be placed in the box can be provided by a perception system that determines the packing plan given the current box packing/occupancy state.
For simplicity, we predefine the target placing pose relative to the box's pose. For the short box we employ $2$ via-points placed every $5$ cm above the target pose and for the taller box $5$ via-points placed every $6$ cm above the target. These via-points have the same orientation as the target pose.
For the obstacle a higher level perception system could be employed for generating via-points to circumvent the obstacle.
Developing such a system is beyond the scope of this work, therefore we consider for simplicity two predefined via-points relative to the obstacle's center pose on the $xy$ plane. The $z$ position and orientation at each via-point is set equal to the DMP$^{++}$ reference at the corresponding phase variable instance. 
To determine whether these via-points should be incorporated in the DMP, we consider an augmented bounding box around the obstacle, equal to the size of the obstacle $+$ the half size of the carton. At each control cycle, we sample the DMP reference trajectory at $30$ future points and check if any of these points are inside the augmented box in order to use the obstacle's via-points in the DMP.

\subsection{Results}


For the first scenario (small carton, short box), snapshots of the execution are shown in Fig. \ref{fig:c1_snaps}, while the results are plotted in Fig. \ref{fig:c1_results}. In particular, in Fig. \ref{fig:c1_path}, the oriented path executed by the robot is shown, where also the initial and final box, target pose and associated via-points are visualized (the initial ones with opaque colors and the final ones with more vivid colors). The target change is also indicated by a light red line and we further plot the demonstrated path (cyan dotted line) for comparison. In Fig. \ref{fig:c1_profile} the translational and rotational velocity norm profile for the execution (blue line) and the demo (green line) are plotted, as well as  the target  displacement dynamics (magenta line). 
The demo velocity profile is scaled temporarily to the execution time duration, by multiplying the velocity with $T_{f,d} / T_f$, where $T_{f,d}=13.5$ sec is the demo duration. 
Moreover, the distance between the initial and target demo poses are $\norm{\vect{p}_{g,d} - \vect{p}_{0,d}} = 0.39$ m for the position and $\norm{\log(\vect{Q}_{g,d}*\vect{Q}_{0,d})} = 48$ degrees for the orientation, while during execution the corresponding initial displacements were larger with values $\norm{\vect{p}_{g}(0) - \vect{p}_{0}} = 0.45$ m and $\norm{{\log(\vect{Q}_{g}(0)*\vect{Q}_{0})}} = 67$ degrees. These larger displacements during execution cause the velocity profile to scale up initially (for $t \in [0 \ 3]$ sec) compared to the demo. As the target position is further displaced after $t=3$ sec, the velocity during execution further scales up compared to the demo. Nevertheless, the scaled execution norm velocity profile resembles the demonstrated one as expected. Some discrepancies are expected due to the target change and the continuity and via-point constraints. Observe also from the demo orientation velocity profile (Fig. \ref{fig:c1_profile} bottom) that the orientation settles before the final target position is reached. This is also the case in the execution, where however, due to the orientation displacement by a total of $37$ degrees (from $t=3$ until $t=4.2$ sec), the velocity profile is readjusted to reach the new target pose. 
Notice also that despite the target displacement which is a bit noisy and updated at a lower rate ($33$ ms compared to the $2$ ms control cycle), the generated velocity profile is smooth owning to the current state constraint in \eqref{eq:current_state_constr}.

\begin{figure}[!htbp]
    \centering
    \begin{subfigure}[b]{\linewidth}
        \centering
        \includegraphics[scale=0.4]{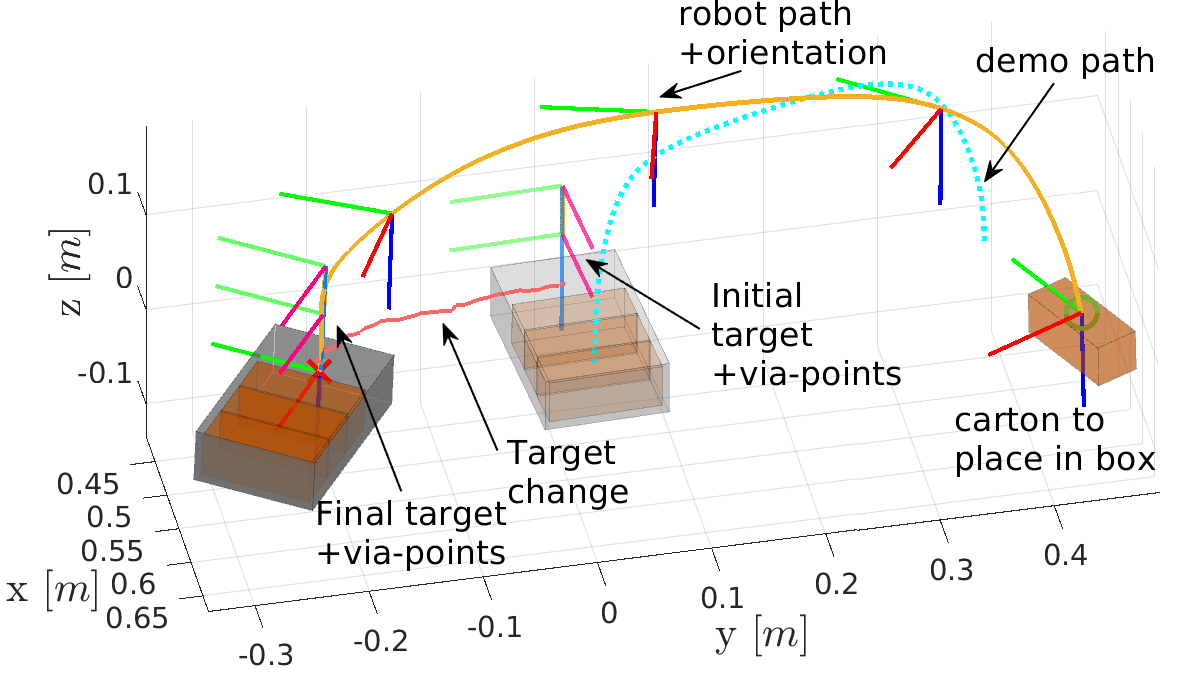}
        \caption{}
        \label{fig:c1_path}
    \end{subfigure}%

    \begin{subfigure}[b]{\linewidth}
        \centering
        \includegraphics[scale=0.52]{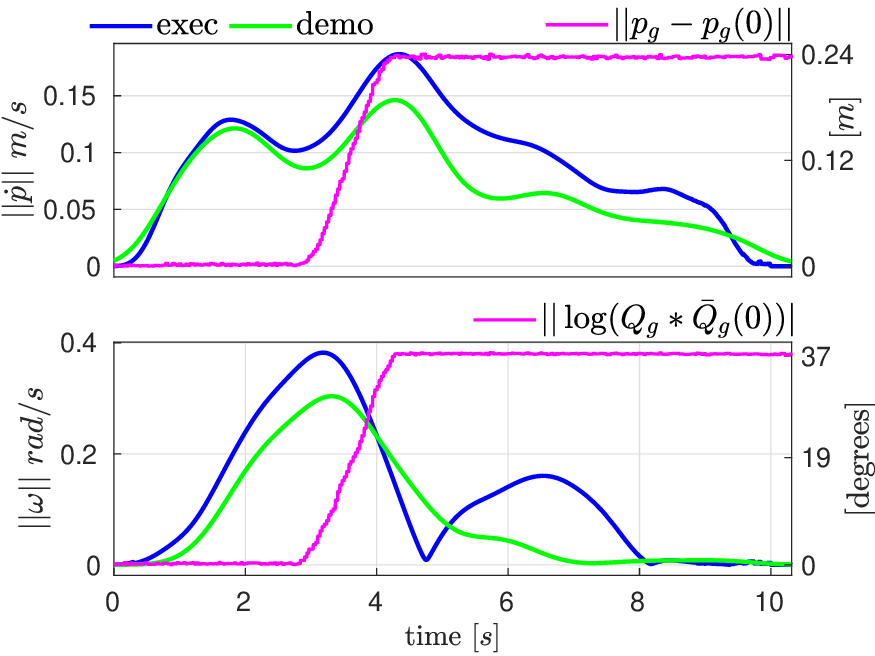}
        \caption{}
        \label{fig:c1_profile}
    \end{subfigure}%
    
    \caption{Experiment 1: Place small carton in short box. The box is displaced online. (a): Robot oriented Cartesian path and visualization of the carton, box and the associated target and via-points. (b): Velocity and target change.}
    \label{fig:c1_results}
\end{figure}


For the second scenario (large carton, tall box), snapshots of the execution are shown in Fig. \ref{fig:c2_snaps}, while the results are plotted in Fig. \ref{fig:c2_results}, with the Cartesian oriented path and the relevant via-points in Fig. \ref{fig:c2_path} and the translational and rotational velocity profile as well as the target change in Fig. \ref{fig:c2_profile}. Despite the larger shape of the carton and the box, the via-points contribute in the successful execution.
Notice that without the via-points, the execution would fail, as shown in the execution snapshots in Fig. \ref{fig:c0_snaps}. Training a new DMP with a new demonstration for the larger boxes could solve this issue. However this is time-consuming and impractical, as for every different object shape, a different DMP would be required. Instead, one can employ a single DMP, which already encapsulates the general desired motion shape-pattern, and adjust-scale it according to the manipulated object's size using via-points in the DMP spatial generalization. 
\update{Notice also that using two separate DMPs, one for reaching the box from the top and the second for inserting the carton would increase the system's complexity with an additional DMP. Moreover, as also discussed in the introduction, in more general and dynamic scenarios, using via-points is more flexible than keeping track of multiple DMPs to connect the via-points and also making additional modifications to ensure a smooth transition between the DMPs. In our case, passing through the via-points and the continuity and smoothness of the motion is handled automatically through the constraints in \eqref{eq:opt_prob}, which further makes no prior assumption regarding the number of via-points, while also accommodating the case where they change dynamically.}

\begin{figure}[!htbp]
    \centering
    
    \begin{subfigure}[b]{\linewidth}
        \centering
        \includegraphics[scale=0.47]{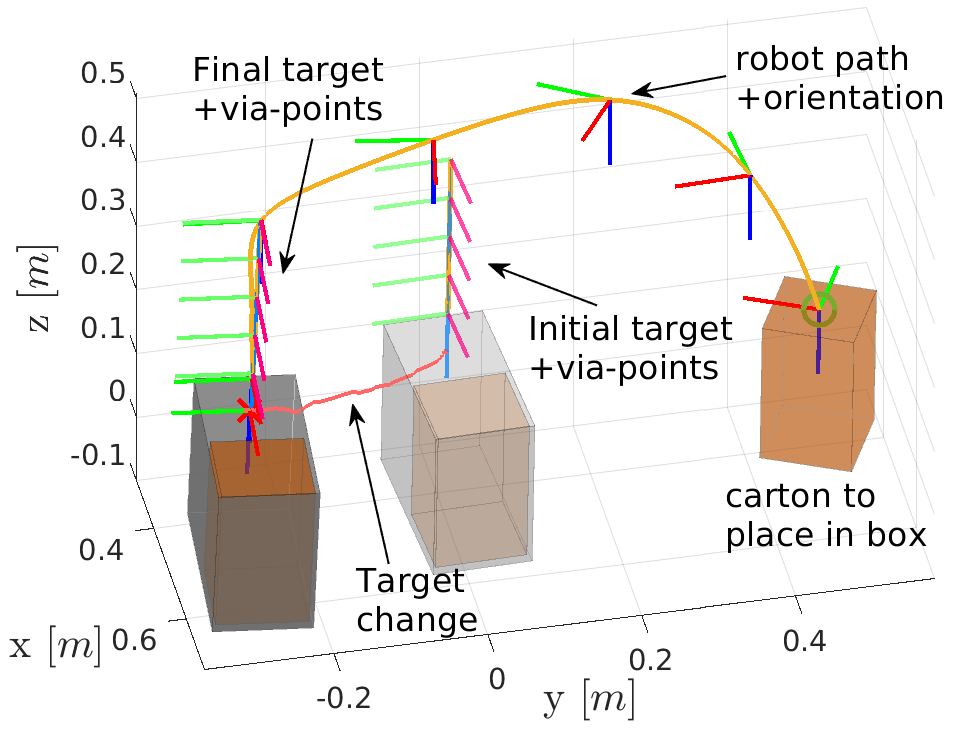}
        \caption{}
        \label{fig:c2_path}
    \end{subfigure}%

    \begin{subfigure}[b]{\linewidth}
        \centering
        \includegraphics[scale=0.52]{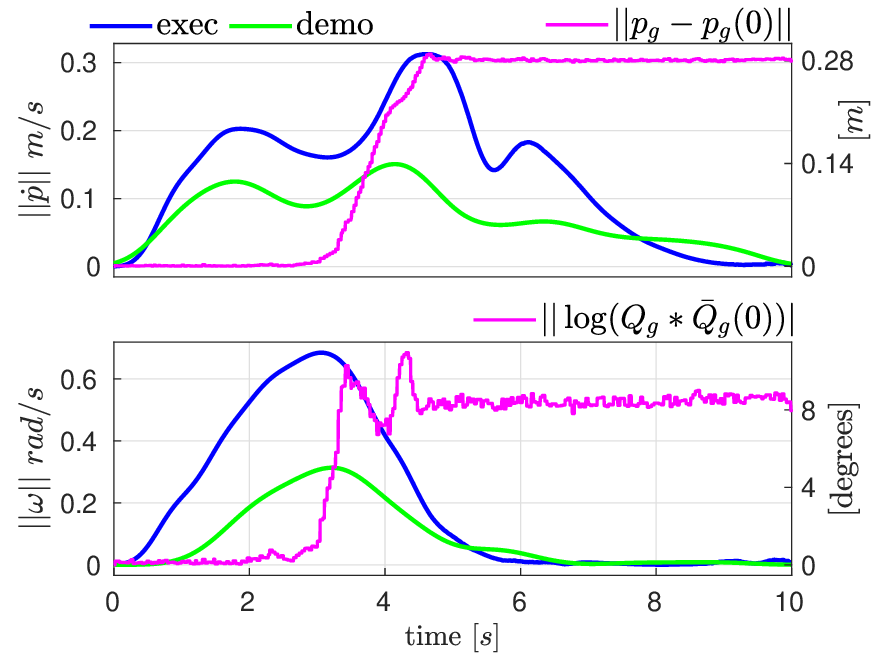}
        \caption{}
        \label{fig:c2_profile}
    \end{subfigure}%
    
    \caption{Experiment 2: Place large carton in tall box. The box is displaced online. (a): Robot oriented Cartesian path and visualization of the carton, box and the associated target and via-points. (b): Velocity and target change.}
    \label{fig:c2_results}
\end{figure}

Finally, in the last scenario shown in Fig. \ref{fig:c3_snaps}.
we see yet another case of incorporating dynamically via-points in the DMP generalization, in the context of obstacle avoidance. 
\update{Via-points can prove useful in such scenarios to steer the robot along a root that is more ergonomic and clutter-free. This can also be combined with the use of artificial potential functions for volumetric obstacle avoidance \cite{Ginesi2021_DMP_obst} to ensure that no collisions occur. Since this is not the main focus in this work and to keep things simple we define the via-points at a distance from the obstacle, considering the dimensions of the obstacle and the carton.
The results are plotted in Fig. \ref{fig:c3_results} where}
the final obstacle pose is plotted with magenta, along with its augmented bounds, shown in light red. The via-points introduced by the obstacle are depicted with cyan asterisks, from which the robot passes. Notice that the DMP trajectory can in general pass through the augmented obstacle bounds. They are not a forbidden region. Instead, they are used here as a rough estimate to detect when to include the obstacle via-points, from which if the DMP passes, it should avoid the obstacle.

\begin{figure}[!htbp]
    \centering
    
    \begin{subfigure}[b]{\linewidth}
        \centering
        \includegraphics[scale=0.5]{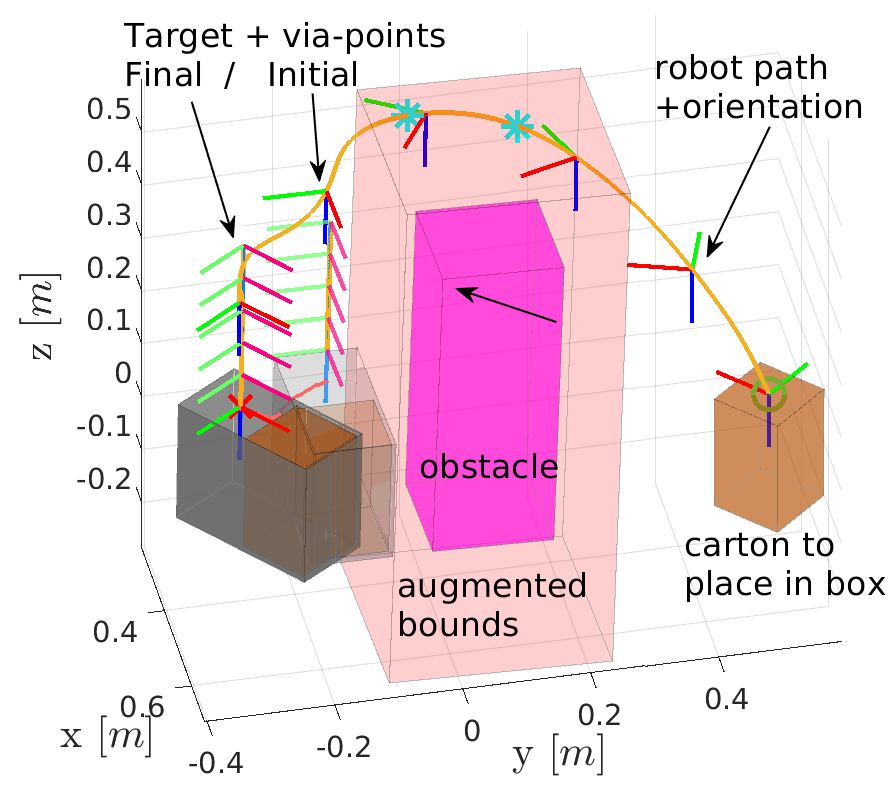}
        \caption{}
        \label{fig:c3_path}
    \end{subfigure}%

    \begin{subfigure}[b]{\linewidth}
        \centering
        \includegraphics[scale=0.52]{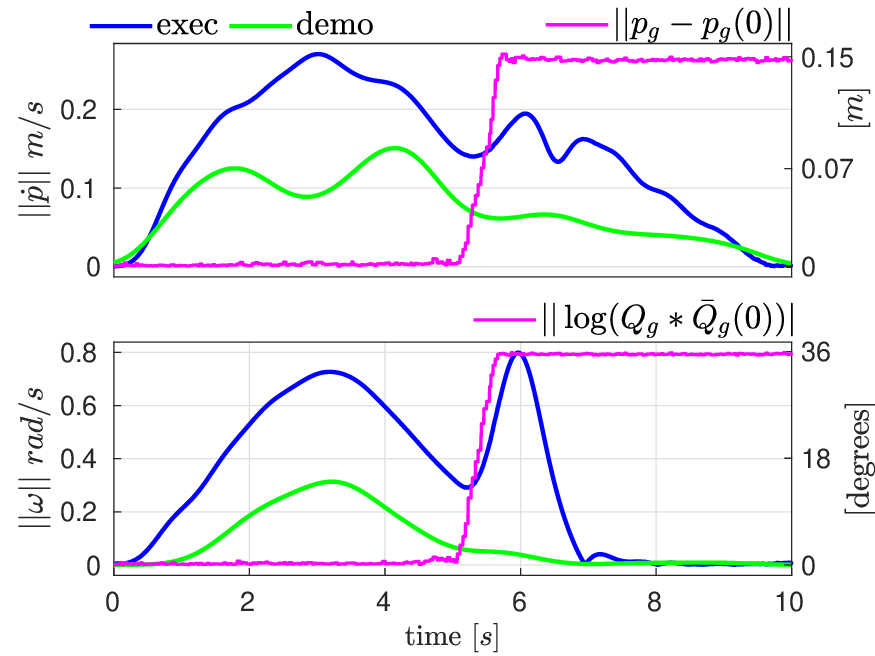}
        \caption{}
        \label{fig:c3_profile}
    \end{subfigure}%
    
    \caption{Experiment 3: Place large carton in tall box. The box is displaced online and a dynamic obstacle is introduced. (a): Robot oriented Cartesian path and visualization of the carton, box, obstacle and the associated target and obstacle-avoidance via-points. (b): Velocity and target change.}
    \label{fig:c3_results}
\end{figure}

A video with all the experimental scenarios along with explanations and visualizations can be found at: \\ 
{\small \ULurl{dropbox.com/s/51dxo7zik2v2ubj/video.mp4?dl=0}}





\section{Conclusions} \label{sec:Conclusions}

In this work we presented an improved spatial generalization for DMP that enables the incorporation of dynamic via-points by proposing an on-line adaptation scheme for the DMP weights that minimizes the distance from the learned demonstrated acceleration profile. Thus, the demonstrated motion pattern is retained and dynamic via-points and initial/final state constraints are satisfied.
Comparative simulations with other SoA methods showcase the advantages of the proposed method. Further validation is carried out via experimental packing scenarios in a dynamic environment where the incorporation of via points enables the use of a single demonstration for executing packing even when obstacles are introduced in the scene and objects and boxes of different sizes are used, whose pose can even change dynamically.


\section*{Appendix A - Unit Quaternion Preliminaries}

\noindent Given a rotation matrix  $\vect{R}\in SO(3)$, an orientation can be expressed in terms of the unit quaternion $\vect{Q} \in \mathbb{S}^{3}$ as $\vect{Q}=[w \ \vect{v}^T]^T = [\cos(\theta_2) \ \sin(\theta_2) \vect{k}^T]^T$,
where $\vect{k} \in \mathbb{S}^2$, $\theta_2 = \theta/2$ with $\theta \in [-\pi, \pi)$ are the equivalent unit axis - angle representation.
The quaternion product between the unit quaternions $\vect{Q}_1$, $\vect{Q}_2$ is denoted as $\vect{Q}_1*\vect{Q}_2$.
The inverse of a unit quaternion is equal to its conjugate which is
$\vect{Q}^{-1} = \bar{\vect{Q}} = [w \ -\vect{v}^T]^T$.
The logarithmic $\vect{\eta} = \log(\vect{Q})$ and exponential $\vect{Q} = \exp(\vect{\eta})$ mappings $\log: \ \mathbb{S}^3 \rightarrow \mathbb{R}^3$, $\exp: \ \mathbb{R}^3 \rightarrow \mathbb{S}^3$ respect the manifold's geometry and are defined as follows:
\begin{equation} \label{eq:quatLog}
    \log(\vect{Q}) \triangleq 
        \left\{
            \begin{matrix}
                2\cos^{-1}(w)\frac{\vect{v}}{||\vect{v}||}, \ |w|\ne1 \\
                [0,0,0]^T, \ \text{otherwise}
            \end{matrix}
        \right.
\end{equation}
\begin{equation} \label{eq:quatExp}
    \exp(\vect{\eta}) \triangleq 
        \left\{
            \begin{matrix}
                [\cos(||\vect{\eta}/2||), \sin(||\vect{\eta}/2||)\frac{\vect{\eta}^T}{||\vect{\eta}||}]^T, \ ||\vect{\eta}|| \ne 0 \\
                [1,0,0,0]^T, \ \text{otherwise}
            \end{matrix}
        \right.
\end{equation}

It can be found analytically that the relations between the rotational velocity and the derivative of the quaternion logarithm are:
\begin{equation} \label{eq:omega_qlogDot}
    \vect{\omega} = \vect{J}_{\eta} \dvect{\eta}
\end{equation}
\begin{equation} \label{eq:qlogDot_omega}
    \dvect{\eta} =  \vect{J}^{\dagger}_{\eta} \vect{\omega}
\end{equation}
where
\begin{align}
    \vect{J}_{\eta} &\triangleq \scalemath{0.9}{\vect{k}\vect{k}^T + \frac{\sth\cth}{\theta_2}(\vect{I}_3 - \vect{k}\vect{k}^T) +\frac{\sin^2(\theta_2)}{\theta_2}[\vect{k}]_{\times}}\label{eq:J_eta} \\
    \vect{J}^{\dagger}_{\eta} &= \scalemath{0.9}{\vect{k}\vect{k}^T + \frac{\theta_2\cth}{\sth}(\vect{I}_3 - \vect{k}\vect{k}^T) -\theta_2[\vect{k}]_{\times}} \label{eq:inv_J_eta}
\end{align}
where $[k]_{\times}$ denotes the skew-symmetric matric of $\vect{k}$.
For $\theta=0$ it can be easily verified that taking the limit of \eqref{eq:J_eta}, \eqref{eq:inv_J_eta} and using L'Hospital's rule we get $\vect{J}_{\eta} = \vect{J}^{\dagger}_{\eta} = \vect{I}_3$.

Differentiating \eqref{eq:omega_qlogDot}, \eqref{eq:qlogDot_omega} we can obtain the relations between the quaternion logarithm second time derivative and the rotational acceleration:

\begin{equation} \label{eq:omegaDot_qlogDDot}
    \dvect{\omega} = \vect{J}_{\eta} \ddvect{\eta} + \dvect{J}_{\eta} \dvect{\eta}
\end{equation}
\begin{equation} \label{eq:qlogDDot_omegaDot}
    \ddvect{\eta} =  \vect{J}^{\dagger}_{\eta} \dvect{\omega} + \dvect{J}^{\dagger}_{\eta} \vect{\omega}
\end{equation}
where
\begin{align} \label{eq:J_eta_dot}
    \dvect{J}_{\eta} &= \scalemath{0.88}{\left( 1 - \frac{\sth \cth}{\theta_2} \right)(\dvect{k}\vect{k}^T + \vect{k}\dvect{k}^T) + \frac{\sin^2(\theta_2)}{\theta_2}[\dvect{k}]_{\times}}  \nonumber \\
    &\scalemath{0.88}{+\left( \frac{1 - 2\sin^2(\theta_2)}{\theta_2} - \frac{\sth \cth}{\theta_2^2} \right)\dot{\theta}_2(\vect{I}_3 - \vect{k}\vect{k}^T)} \nonumber \\ 
    &\scalemath{0.88}{+\left( \frac{2\sth \cth}{\theta_2} - \frac{\sin^2(\theta)}{\theta_2^2} \right)\dot{\theta}_2[\vect{k}]_{\times}}
\end{align}
\begin{align} \label{eq:inv_J_eta_dot}
    \dvect{J}^{\dagger}_{\eta} &= \scalemath{0.88}{\left( 1 - \frac{\theta_2 \cth}{\sth} \right)(\dvect{k}\vect{k}^T + \vect{k}\dvect{k}^T)
    -\dot{\theta_2}[\vect{k}]_{\times}-\theta_2[\dvect{k}]_{\times}} \nonumber \\
    &\scalemath{0.88}{+\left( \frac{\sth \cth - \theta_2}{\sin^2(\theta_2)} \right)\dot{\theta}_2(\vect{I}_3 - \vect{k}\vect{k}^T)}
\end{align}
with $\dot{\theta}_2 = \frac{1}{2}\vect{k}^T \dvect{\vect{\eta}}$ and $\dvect{k} = \frac{1}{2} \left( \frac{1 - \vect{k} \vect{k}^T}{\theta_2} \dvect{\vect{\eta}} \right)$. Taking the limit of \eqref{eq:J_eta_dot}, \eqref{eq:inv_J_eta_dot} for $\theta=0$ we have that $\dvect{J}_{\eta} = \dvect{J}^{\dagger}_{\eta} = \vect{0}$.

Finally, the relations between the Cartesian torque $\vect{\tau}$ and its transformation in the quaternion logarithm space $\vect{\tau}_{\eta}$ is given by:
\begin{equation} \label{eq:logTorq_torq}
    \vect{\tau}_{\eta} =  \vect{J}_{\eta}^T \vect{\tau}
\end{equation}
\begin{equation} \label{eq:torq_logTorq}
    \vect{\tau} = (\vect{J}^{\dagger}_{\eta})^T \vect{\tau}_{\eta}
\end{equation}
These mappings follow readily from the preservation of power, i.e. it should hold that $\vect{\omega}^T\vect{\tau} = \dvect{\eta}^T\vect{\tau}_{\eta}$.


\section*{Appendix B - Recursive Least Squares derivations}

\noindent Here we provide some useful results in the form of theorems which facilitate the proof of the update formulas \eqref{eq:downdate}, \eqref{eq:update} that solve \eqref{eq:opt_prob}.
Results in Section 2 from \cite{kailath2000linear} for vectors are here extended for matrices in \textit{Theorems} $1,2$ and $5$. 
\textit{Theorems} $1,2$ are used in the proof of \textit{Theorems} $3 - 5$, and \textit{Theorems} $2 - 5$ are employed in the proof for solving \eqref{eq:opt_prob}.
 

\begin{theorem} \label{theo:LS_sol}
The solution to the problem:
\begin{align}
    \text{min}_{\vect{W}} \trace{ (\vect{Y} - \vect{W}^T \vPhi)^T \vect{R}^{-1} (\vect{Y}- \vect{W}^T \vPhi) } \label{eq:prob0}
\end{align}
with $\vect{Y} \in \mathbb{R}^{n \times m}$, $\vect{W} \in \mathbb{R}^{k \times n}$, $\vPhi \in \mathbb{R}^{k \times m}$, $\text{rank}(\vPhi) = k$, $\vect{R} \in \mathcal{S}^n_{++}$ is given by:
\begin{align} \label{eq:prob0_sol}
    \vect{W}_0 &= \vect{P}_0 \vPhi \vect{R}^{-1} \vect{Y}^T \\
    \vect{P}_0 &= (\vPhi \vect{R}^{-1} \vPhi^T)^{-1}
\end{align}
where $\vect{P}_0 > \vect{0}$.
\end{theorem}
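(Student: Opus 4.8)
The plan is to recognise \eqref{eq:prob0} as an unconstrained, strictly convex quadratic program in $\vect{W}$, read off its minimiser from the first-order (normal) equations, and then check that the two matrices in \eqref{eq:prob0_sol} are well defined. A convenient way to organise the computation is to split the objective over the columns of $\vect{W}$: writing $\vect{W} = [\vect{w}_1 \ \cdots \ \vect{w}_n]$ and letting $\vect{y}^{(l)}$ denote the $l$-th column of $\vect{Y}^T$, each block $\vect{w}_l$ enters only through a term of the form $(\vect{y}^{(l)} - \vPhi^T \vect{w}_l)^T \vect{R}^{-1} (\vect{y}^{(l)} - \vPhi^T \vect{w}_l)$, so \eqref{eq:prob0} decouples into $n$ independent vector-valued weighted least-squares problems of the classical form treated in \cite{kailath2000linear}.

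Next I would solve one such block. Setting the gradient of $(\vect{y} - \vPhi^T \vect{w})^T \vect{R}^{-1} (\vect{y} - \vPhi^T \vect{w})$ with respect to $\vect{w}$ to zero gives the normal equation $\vPhi \vect{R}^{-1} \vPhi^T \vect{w} = \vPhi \vect{R}^{-1} \vect{y}$, hence $\vect{w} = (\vPhi \vect{R}^{-1} \vPhi^T)^{-1} \vPhi \vect{R}^{-1} \vect{y}$. Stacking these $n$ solutions as the columns of $\vect{W}$ and using $[\vect{y}^{(1)} \ \cdots \ \vect{y}^{(n)}] = \vect{Y}^T$ gives exactly $\vect{W}_0 = \vect{P}_0 \vPhi \vect{R}^{-1} \vect{Y}^T$ with $\vect{P}_0 = (\vPhi \vect{R}^{-1} \vPhi^T)^{-1}$. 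Equivalently, one can differentiate the compact trace form of the objective directly, using the standard matrix-trace derivative identities, and arrive at the same normal equations in one step.

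It then remains to justify the invertibility/positivity claims and that this stationary point is the global minimiser. Since $\vect{R}$ is symmetric positive definite it admits a symmetric invertible square root $\vect{R}^{-1/2}$, so $\vPhi \vect{R}^{-1} \vPhi^T = (\vPhi \vect{R}^{-1/2})(\vPhi \vect{R}^{-1/2})^T$ is positive semidefinite; because $\vect{R}^{-1/2}$ is invertible, $\vPhi \vect{R}^{-1/2}$ has the same rank $k$ as $\vPhi$, so this $k \times k$ matrix has full rank, is therefore positive definite and invertible, and hence $\vect{P}_0 = (\vPhi \vect{R}^{-1} \vPhi^T)^{-1} > \vect{0}$. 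The same computation shows the Hessian of each block subproblem equals $2 \vPhi \vect{R}^{-1} \vPhi^T > \vect{0}$, so the subproblems — and hence \eqref{eq:prob0} — are strictly convex and the unique stationary point found above is the global minimiser. I expect the only mildly delicate step to be precisely this rank/positive-definiteness argument, together with keeping the column-wise bookkeeping (and the transposes in the compact trace form) consistent; everything else is routine matrix algebra.
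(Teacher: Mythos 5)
Your proposal is correct and follows essentially the same route as the paper's (one-line) proof: set the gradient of the quadratic objective to zero, read off the normal equations $\vPhi\vect{R}^{-1}\vPhi^T\vect{W} = \vPhi\vect{R}^{-1}\vect{Y}^T$, and justify $\vect{P}_0>\vect{0}$ from $\vect{R}>\vect{0}$ and $\text{rank}(\vPhi)=k$; your column-wise decoupling and explicit square-root rank argument merely fill in details the paper leaves implicit. Note only that your decomposition tacitly reads $\vect{R}$ as an $m\times m$ weight on the data points, which is the interpretation consistent with the stated solution $\vect{W}_0=\vect{P}_0\vPhi\vect{R}^{-1}\vect{Y}^T$ and with how the theorem is later applied, even though the statement writes $\vect{R}\in\mathcal{S}^n_{++}$.
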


\begin{proof}
    Taking the derivative of $f_0$ w.r.t. $\vect{W}$ and solving for $\vect{W}$, it is straightforward to verify that the solution is indeed given by $\vect{W}_0$, where $\vect{P} > \vect{0}$ since $\vect{R} > \vect{0}$ and $\text{rank}(\vPhi) = k$.
\end{proof}

\begin{theorem} \label{theo:LS_update}
The solution to the problem:
\begin{align}
    \text{min}_{\vect{W}} f_0(\vect{W}) + \trace{ (\vect{Z} - \vect{W}^T \vect{H})^T \vect{R}_1^{-1} (\vect{Z} - \vect{W}^T \vect{H}) } \label{eq:prob1}
\end{align}
with $f_0(\vect{W}) = \trace{ (\vect{Y} - \vect{W}^T \vPhi)^T \vect{R}^{-1} (\vect{Y}- \vect{W}^T \vPhi) }$, $\vect{Y} \in \mathbb{R}^{n \times m}$, $\vect{W} \in \mathbb{R}^{k \times n}$, $\vPhi \in \mathbb{R}^{k \times m}$, $\text{rank}(\vPhi) = k$, $\vect{R}, \vect{R}_1 \in \mathcal{S}^n_{++}$, $\vect{Z} \in \mathbb{R}^{n \times l}$, $\vect{H} \in \mathbb{R}^{k \times l}$, can be obtained from the solution $\vect{W}_0$, $\vect{P}_0$ of $\text{min}_{\vect{W}} f_0(\vect{W})$ as follows:
\begin{align} \label{eq:prob1_sol}
    \vect{W}_1 &= \vect{W}_0 + \vect{P}_0 \vect{H}(\vect{R}_1 + \vect{H}^T\vect{P}_0\vect{H})^{-1}(\vect{Z} - \vect{W}_0^T\vect{H})^T \\
    \vect{P}_1 \ &= \vect{P}_0 - \vect{P}_0 \vect{H}(\vect{R}_1 + \vect{H}^T\vect{P}_0\vect{H})^{-1} \vect{H}^T \vect{P}_0 \label{eq:prob1_P1}\\
    \vect{P}_1^{-1} &= \vect{P}_0 + \vect{H}\vect{R}_1^{-1}\vect{H}^T
\end{align}
where $\vect{P}_1 > \vect{0}$. 
\end{theorem}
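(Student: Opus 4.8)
The plan is to reduce the problem to a single unconstrained least-squares objective and then apply the standard matrix-inversion-lemma argument. First I would expand the augmented cost $g(\vect{W}) = f_0(\vect{W}) + \trace{ (\vect{Z} - \vect{W}^T \vect{H})^T \vect{R}_1^{-1} (\vect{Z} - \vect{W}^T \vect{H}) }$ and take the gradient with respect to $\vect{W}$. Using the fact (from the proof of \theoref{theo:LS_sol}) that $\partial f_0 / \partial \vect{W} = 2(\vPhi \vect{R}^{-1}\vPhi^T \vect{W} - \vPhi \vect{R}^{-1}\vect{Y}^T) = 2(\vect{P}_0^{-1}\vect{W} - \vect{P}_0^{-1}\vect{W}_0)$, the stationarity condition for $g$ becomes
\begin{align}
    (\vect{P}_0^{-1} + \vect{H}\vect{R}_1^{-1}\vect{H}^T)\vect{W}_1 = \vect{P}_0^{-1}\vect{W}_0 + \vect{H}\vect{R}_1^{-1}\vect{Z}^T . \nonumber
\end{align}
This immediately identifies $\vect{P}_1^{-1} = \vect{P}_0^{-1} + \vect{H}\vect{R}_1^{-1}\vect{H}^T$, and since $\vect{P}_0 > \vect{0}$, $\vect{R}_1 > \vect{0}$, the sum is positive definite, so $\vect{P}_1 > \vect{0}$ and the solution is unique.

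Next I would convert this "information form" solution into the "covariance form" claimed in \eqref{eq:prob1_sol}, \eqref{eq:prob1_P1}. For $\vect{P}_1$ I would apply the Woodbury matrix identity to $(\vect{P}_0^{-1} + \vect{H}\vect{R}_1^{-1}\vect{H}^T)^{-1}$, which yields exactly $\vect{P}_0 - \vect{P}_0\vect{H}(\vect{R}_1 + \vect{H}^T\vect{P}_0\vect{H})^{-1}\vect{H}^T\vect{P}_0$. For $\vect{W}_1 = \vect{P}_1(\vect{P}_0^{-1}\vect{W}_0 + \vect{H}\vect{R}_1^{-1}\vect{Z}^T)$ I would substitute the covariance-form expression for $\vect{P}_1$, rewrite $\vect{P}_1\vect{P}_0^{-1}\vect{W}_0 = \vect{W}_0 - \vect{P}_0\vect{H}(\vect{R}_1 + \vect{H}^T\vect{P}_0\vect{H})^{-1}\vect{H}^T\vect{W}_0$, and simplify the term $\vect{P}_1\vect{H}\vect{R}_1^{-1}\vect{Z}^T$ using the standard gain-matrix identity $\vect{P}_1\vect{H}\vect{R}_1^{-1} = \vect{P}_0\vect{H}(\vect{R}_1 + \vect{H}^T\vect{P}_0\vect{H})^{-1}$ (which follows by multiplying out $\vect{P}_0 - \vect{P}_0\vect{H}(\vect{R}_1+\vect{H}^T\vect{P}_0\vect{H})^{-1}\vect{H}^T\vect{P}_0$ on the right by $\vect{H}\vect{R}_1^{-1}$ and factoring). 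Collecting the $\vect{H}^T\vect{W}_0$ and $\vect{Z}^T$ terms then gives the stated $\vect{W}_1 = \vect{W}_0 + \vect{P}_0\vect{H}(\vect{R}_1 + \vect{H}^T\vect{P}_0\vect{H})^{-1}(\vect{Z} - \vect{W}_0^T\vect{H})^T$.

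The main obstacle is bookkeeping rather than conceptual: the matrix identities are all classical (Woodbury, gain-matrix rearrangement), but because everything is matrix-valued — $\vect{W}$ is $k\times n$, not a vector — one must be careful with transposes, especially in the $\vect{Z}^T$ and $(\vect{Z} - \vect{W}_0^T\vect{H})^T$ terms, and verify that the trace-derivative formulas used for the vector case in \cite{kailath2000linear} carry over verbatim. A clean way to sidestep most of this is to note that the trace objective decouples column-wise across the $n$ columns of $\vect{W}$ and $\vect{Y}$, $\vect{Z}$, so the matrix result is just the vector RLS update of \cite{kailath2000linear} applied simultaneously to each column with a shared gain matrix $\vect{P}_0\vect{H}(\vect{R}_1 + \vect{H}^T\vect{P}_0\vect{H})^{-1}$; I would either invoke this decoupling directly to import the vector result, or carry out the matrix gradient computation explicitly as above. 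Either route reaches the three displayed equations, and positive definiteness of $\vect{P}_1$ follows from the information-form expression as noted.
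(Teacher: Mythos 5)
Your proposal is correct and follows essentially the same route as the paper: the paper derives the identical normal equations by stacking $(\vect{Y},\vPhi,\vect{R})$ with $(\vect{Z},\vect{H},\vect{R}_1)$ and invoking \theoref{theo:LS_sol} rather than differentiating the augmented cost directly, then applies the matrix inversion lemma and the same gain identity $\vect{P}_1\vect{H}\vect{R}_1^{-1}=\vect{P}_0\vect{H}(\vect{R}_1+\vect{H}^T\vect{P}_0\vect{H})^{-1}$ (its equation \eqref{eq:PH_y_identity}) to reach the innovation form. As a side note, your information-form relation $\vect{P}_1^{-1}=\vect{P}_0^{-1}+\vect{H}\vect{R}_1^{-1}\vect{H}^T$ is the correct one; the third display in the theorem statement omits the inverse on $\vect{P}_0$, which is a typo in the paper.
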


\begin{proof}
    We can rewritte the cost function in \eqref{eq:prob1} as 
    \begin{equation*}
        \trace{ (\bvect{Y} - \vect{W}^T \bvect{\Phi})^T \bvect{R}^{-1} (\bvect{Y}- \vect{W}^T \bvect{\Phi}) }
    \end{equation*}
    with
    \begin{equation*}
        \bvect{Y} = [\vect{Y}^T \ \vect{Z}^T]^T, \ \bvect{\Phi} = [\vPhi^T \ \vect{H}^T ]^T, \ \bvect{R} = \text{blkdiag}(\vect{R}, \vect{R}_1)
    \end{equation*}
    and apply the result of \theoref{theo:LS_sol} to get:
    \begin{equation} \label{eq:prob1_temp_w}
        \vect{W} = \vect{P}_1^{-1} \bvect{\Phi} \bvect{R}^{-1} \bvect{Y}^T = \vect{P}_1^{-1}(\vPhi \vect{R}^{-1} \vect{Y}^T + \vect{H}\vect{R}_1^{-1}\vect{Z}^T)
    \end{equation}
    where $\vect{P}_1^{-1} = \bvect{\Phi} \bvect{R}^{-1} \bvect{\Phi}^T = (\vect{P}_0^{-1} + \vect{H}\vect{R}_1^{-1}\vect{H}^T)$ with $\vect{P}_0^{-1} = \vPhi \vect{R}^{-1} \vPhi^T$. It follows that $\vect{P}_1^{-1} > \vect{0}$, since $\vect{P}_0^{-1} > \vect{0}$, due to $\vect{R} > \vect{0}$ and $\text{rank}(\vPhi) = k$, and $\vect{H}\vect{R}_1^{-1}\vect{H}^T \ge 0$.
    Applying the matrix inversion lemma it follows that $\vect{P}_1$ is indeed given by \eqref{eq:prob1_P1} and substituting it in \eqref{eq:prob1_temp_w} we get:
    \begin{equation} \label{eq:prob1_temp_w2}
        \begin{aligned}
            \vect{W} =& \vect{W}_0 - \vect{P}_0 \vect{H} (\vect{R}_1 + \vect{H}^T\vect{P}_0\vect{H})^{-1}(\vect{W}_0^T\vect{H})^T + \\
           &(\vect{P}_0 - \vect{P}_0 \vect{H} (\vect{R}_1 + \vect{H}^T\vect{P}_0\vect{H})^{-1}\vect{H}^T\vect{P}_0)\vect{H}\vect{R}_1^{-1}\vect{Z}^T
        \end{aligned}
    \end{equation}
    where $\vect{W}_0 = \vect{P}_0 \vPhi \vect{R}^{-1} \vect{Y}^T$, $\vect{P}_0 = (\vPhi \vect{R}^{-1} \vPhi^T)^{-1}$ is indeed the solution of $\text{min}_{\vect{W}} f_0(\vect{W})$ based on \theoref{theo:LS_sol}.
    We can further process the last term in \eqref{eq:prob1_temp_w2}, i.e.:
    \begin{align} \label{eq:PH_y_identity}
        &\scalemath{0.9}{(\vect{P}_0 - \vect{P}_0 \vect{H} (\vect{R}_1 + \vect{H}^T\vect{P}_0\vect{H})^{-1}\vect{H}^T \vect{P}_0)\vect{H}\vect{R}_1^{-1}\vect{Z}^T } \nonumber \\
       &\scalemath{0.9}{= (\vect{P}_0\vect{H} - \vect{P}_0 \vect{H} (\vect{R}_1 + \vect{H}^T\vect{P}_0\vect{H})^{-1}\vect{H}^T \vect{P}_0\vect{H})\vect{R}_1^{-1}\vect{Z}^T } \nonumber \\
       &\scalemath{0.9}{= \vect{P}_0\vect{H}(\vect{I}_k - (\vect{R}_1 + \vect{H}^T\vect{P}_0\vect{H})^{-1}(\pm \vect{R}_1 + \vect{H}^T \vect{P}_0\vect{H}))\vect{R}_1^{-1}\vect{Z}^T } \nonumber \\
       &\scalemath{0.9}{= \vect{P}_0 \vect{H} (\vect{R}_1 + \vect{H}^T\vect{P}_0\vect{H})^{-1}\vect{Z}^T}
    \end{align}
    and substituting it back  to \eqref{eq:prob1_temp_w2} we arrive at the solution given by \eqref{eq:prob1_sol}.
\end{proof}

\begin{theorem} \label{theo:LS_equivalence}
Problem \eqref{eq:prob1} is equivalent to the problem
\begin{align}
    \text{min}_{\vect{W}} &\trace{(\vect{W} - \vect{W}_0)\vect{P}_0^{-1}(\vect{W} - \vect{W}_0)^T} + \\
    &\trace{ (\vect{Z} - \vect{W}^T \vect{H})^T \vect{R}_1^{-1} (\vect{Z} - \vect{W}^T \vect{H}) } \label{eq:prob1_equiv}
\end{align}
where $\vect{W}_0$, $\vect{P}_0$ are the solution to $\text{min}_{\vect{W}} f_0(\vect{W})$.
\end{theorem}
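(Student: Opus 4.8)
The plan is to prove the equivalence by \emph{completing the square} on the quadratic term $f_0$. The point is that $f_0(\vect{W})$ is a matrix quadratic in $\vect{W}$ whose unique minimizer is $\vect{W}_0$ and whose curvature is exactly $\vect{P}_0^{-1}$; consequently $f_0(\vect{W})$ differs from $\trace{(\vect{W}-\vect{W}_0)^T \vect{P}_0^{-1}(\vect{W}-\vect{W}_0)}$ only by the additive constant $f_0(\vect{W}_0)$, which is independent of $\vect{W}$. Adding the common term $\trace{(\vect{Z}-\vect{W}^T\vect{H})^T \vect{R}_1^{-1}(\vect{Z}-\vect{W}^T\vect{H})}$ to both expressions then shows that the objectives of \eqref{eq:prob1} and \eqref{eq:prob1_equiv} differ by that same constant, hence share the same set of minimizers.

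To carry this out I would first substitute $\vect{W} = \vect{W}_0 + \vect{\Delta}$ and write the residual as $\vect{Y} - \vect{W}^T\vPhi = \vect{E}_0 - \vect{\Delta}^T\vPhi$ with $\vect{E}_0 \triangleq \vect{Y} - \vect{W}_0^T\vPhi$. Expanding $f_0$ yields three pieces: the constant $f_0(\vect{W}_0)$, a term linear in $\vect{\Delta}$, and a term quadratic in $\vect{\Delta}$. Next I would show the linear term vanishes: by \theoref{theo:LS_sol} we have $\vect{W}_0 = \vect{P}_0 \vPhi \vect{R}^{-1}\vect{Y}^T$ and $\vect{P}_0^{-1} = \vPhi \vect{R}^{-1}\vPhi^T$, so that $\vPhi \vect{R}^{-1}\vPhi^T \vect{W}_0 = \vPhi\vect{R}^{-1}\vect{Y}^T$, i.e. the normal equation $\vPhi\vect{R}^{-1}\vect{E}_0^T = \vect{0}$, equivalently $\vect{E}_0 \vect{R}^{-1}\vPhi^T = \vect{0}$ on transposing (using $\vect{R}$ symmetric); by cyclicity and transpose-invariance of the trace, the cross term is a scalar multiple of $\trace{\vect{E}_0\vect{R}^{-1}\vPhi^T\vect{\Delta}}$ and therefore zero. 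Finally the quadratic term reduces, again by cyclicity of the trace, to $\trace{\vect{\Delta}^T \vPhi\vect{R}^{-1}\vPhi^T\vect{\Delta}} = \trace{\vect{\Delta}^T \vect{P}_0^{-1}\vect{\Delta}}$, which is precisely the first term in \eqref{eq:prob1_equiv} with $\vect{\Delta} = \vect{W}-\vect{W}_0$.

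Combining the three pieces gives the identity $f_0(\vect{W}) = f_0(\vect{W}_0) + \trace{(\vect{W}-\vect{W}_0)^T \vect{P}_0^{-1}(\vect{W}-\vect{W}_0)}$, valid for every $\vect{W}$. Since $f_0(\vect{W}_0)$ does not depend on the optimization variable, adding $\trace{(\vect{Z}-\vect{W}^T\vect{H})^T\vect{R}_1^{-1}(\vect{Z}-\vect{W}^T\vect{H})}$ to both sides shows the objective of \eqref{eq:prob1} equals that of \eqref{eq:prob1_equiv} up to this constant, so the two minimization problems are equivalent (identical argmin sets, with the optimal values differing only by $f_0(\vect{W}_0)$). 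The only place needing care is the transpose bookkeeping in the trace manipulations, namely verifying the cancellation $\vect{E}_0\vect{R}^{-1}\vPhi^T = \vect{0}$ and the identification $\vPhi\vect{R}^{-1}\vPhi^T = \vect{P}_0^{-1}$; there is no conceptual obstacle, this being just the matrix lift of the standard information-form least-squares identity already exploited in the preceding theorems.
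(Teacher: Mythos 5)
Your proof is correct, but it proceeds differently from the paper's. The paper's proof is a one-liner: it takes the gradient of the objective in \eqref{eq:prob1_equiv}, observes that the resulting stationarity condition is exactly the linear system \eqref{eq:prob1_temp_w} already derived for problem \eqref{eq:prob1} in \theoref{theo:LS_update}, and concludes that the two (strictly convex) problems share the same minimizer. You instead prove the stronger pointwise identity
\begin{equation*}
f_0(\vect{W}) = f_0(\vect{W}_0) + \trace{(\vect{W}-\vect{W}_0)^T\vect{P}_0^{-1}(\vect{W}-\vect{W}_0)},
\end{equation*}
by completing the square, using the normal equation $\vPhi\vect{R}^{-1}\vect{E}_0^T=\vect{0}$ to kill the cross term and $\vPhi\vect{R}^{-1}\vPhi^T=\vect{P}_0^{-1}$ to identify the curvature; equivalence then follows because the two objectives differ by a constant independent of $\vect{W}$. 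Both arguments hinge on the same two facts from \theoref{theo:LS_sol}, but yours buys something extra: equality of the objectives up to an additive constant is precisely the property invoked later in Section \ref{subsec:prob_proof}, where $f_i$ is rewritten as $\hat f_{i-1}$ plus a new quadratic penalty and the previous solution is ``summarized'' by $(\hvect{W}_i,\hvect{P}_i)$ --- the gradient-matching argument only certifies a common argmin, whereas your identity justifies substituting the whole function. The paper's route is shorter because the gradient computation is recycled from \theoref{theo:LS_update}. One cosmetic caveat: the transpose placement in \eqref{eq:prob1_equiv} as printed ($\trace{(\vect{W}-\vect{W}_0)\vect{P}_0^{-1}(\vect{W}-\vect{W}_0)^T}$ with $\vect{W}\in\mathbb{R}^{k\times n}$ and $\vect{P}_0\in\mathbb{R}^{k\times k}$) is dimensionally inconsistent; your version with the transpose on the first factor is the correct reading, so no change to your argument is needed.
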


\begin{proof}
    Taking the gradient and solving w.r.t. $\vect{W}$ we can readily obtain \eqref{eq:prob1_temp_w} after which the same analysis as in the proof of \theoref{theo:LS_update} follows.
\end{proof}

\begin{theorem} \label{theo:LS_eq_constr}
The optimization problem
\begin{align}
    \text{min}_{\vect{W}} & f_0(\vect{W}) \label{eq:prob2} \\
    \text{s.t.} \quad & \vect{W}^T \vect{H} = \vect{Z} \nonumber
\end{align}
with $f_0(\vect{W}) = \trace{ (\vect{Y} - \vect{W}^T \vPhi)^T \vect{R}^{-1} (\vect{Y}- \vect{W}^T \vPhi) }$, $\vect{Y} \in \mathbb{R}^{n \times m}$, $\vect{W} \in \mathbb{R}^{k \times n}$, $\vPhi \in \mathbb{R}^{k \times m}$, $\text{rank}(\vPhi) = k$, $\vect{R} \in \mathcal{S}^n_{++}$, $\vect{Z} \in \mathbb{R}^{n \times l}$, $\vect{H} \in \mathbb{R}^{k \times l}$, $\text{rank}(\vect{H})=l \le k$ , is equivalent to the problem:
\begin{equation} \label{eq:prob2_equiv}
    \text{min}_{\vect{W}} f_0(\vect{W}) + \trace{(\vect{Z} - \vect{W}^T \vect{H})^T \vect{R}_{\epsilon}^{-1} (\vect{Z} - \vect{W}^T \vect{H}) }
\end{equation}
for $\vect{R}_{\epsilon} \rightarrow \vect{0}^+$.
\end{theorem}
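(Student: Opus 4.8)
The plan is to read \eqref{eq:prob2_equiv} as a quadratic penalty relaxation of the equality-constrained problem \eqref{eq:prob2} and to show that, as $\vect{R}_\epsilon \to \vect{0}^+$, the unique minimizer of \eqref{eq:prob2_equiv} converges to the unique minimizer of \eqref{eq:prob2}, so that the two problems have the same solution. First I would observe that \eqref{eq:prob2_equiv} is precisely problem \eqref{eq:prob1} with $\vect{R}_1 = \vect{R}_\epsilon$, so by \theoref{theo:LS_update} its minimizer is
\begin{equation*}
    \vect{W}_\epsilon = \vect{W}_0 + \vect{P}_0\vect{H}\left(\vect{R}_\epsilon + \vect{H}^T\vect{P}_0\vect{H}\right)^{-1}(\vect{Z} - \vect{W}_0^T\vect{H})^T ,
\end{equation*}
with $\vect{W}_0$ and $\vect{P}_0 > \vect{0}$ the solution of $\text{min}_{\vect{W}} f_0(\vect{W})$ from \theoref{theo:LS_sol}.

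Next I would pass to the limit $\vect{R}_\epsilon \to \vect{0}^+$. The only $\epsilon$-dependent factor is $(\vect{R}_\epsilon + \vect{H}^T\vect{P}_0\vect{H})^{-1}$, and this is exactly where the hypothesis $\text{rank}(\vect{H}) = l \le k$ enters: together with $\vect{P}_0 > \vect{0}$ it makes $\vect{H}^T\vect{P}_0\vect{H}$ positive definite, hence invertible, so the inverse is continuous at $\vect{R}_\epsilon = \vect{0}$ and $\vect{W}_\epsilon \to \vect{W}_\star \triangleq \vect{W}_0 + \vect{P}_0\vect{H}(\vect{H}^T\vect{P}_0\vect{H})^{-1}(\vect{Z} - \vect{W}_0^T\vect{H})^T$. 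A direct substitution (using symmetry of $\vect{P}_0$ and of $\vect{H}^T\vect{P}_0\vect{H}$) gives $\vect{W}_\star^T\vect{H} = \vect{W}_0^T\vect{H} + (\vect{Z} - \vect{W}_0^T\vect{H})(\vect{H}^T\vect{P}_0\vect{H})^{-1}(\vect{H}^T\vect{P}_0\vect{H}) = \vect{Z}$, so $\vect{W}_\star$ is feasible for \eqref{eq:prob2}.

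It then remains to show $\vect{W}_\star$ is optimal for \eqref{eq:prob2}, which I would do by sandwiching. For any feasible $\vect{W}$, i.e. $\vect{W}^T\vect{H} = \vect{Z}$, the penalty term in \eqref{eq:prob2_equiv} vanishes, so $f_0(\vect{W})$ equals the objective of \eqref{eq:prob2_equiv} evaluated at $\vect{W}$, which cannot be smaller than its minimum value, namely its value at $\vect{W}_\epsilon$; dropping the nonnegative penalty term at $\vect{W}_\epsilon$ then gives $f_0(\vect{W}) \ge f_0(\vect{W}_\epsilon)$. Letting $\epsilon \to 0^+$ and using continuity of $f_0$ yields $f_0(\vect{W}) \ge f_0(\vect{W}_\star)$ for every feasible $\vect{W}$. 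Since $f_0$ is strictly convex (its curvature $\vect{P}_0^{-1}$ is positive definite; cf. \theoref{theo:LS_equivalence}, which recasts $\text{min}\,f_0$ as minimizing $\trace{(\vect{W}-\vect{W}_0)^T\vect{P}_0^{-1}(\vect{W}-\vect{W}_0)}$) and the feasible set of \eqref{eq:prob2} is nonempty (because $\text{rank}(\vect{H}) = l \le k$ makes $\vect{H}^T\vect{W} = \vect{Z}^T$ consistent) and affine, \eqref{eq:prob2} has a unique minimizer, which therefore coincides with $\vect{W}_\star = \lim_{\vect{R}_\epsilon\to\vect{0}^+}\vect{W}_\epsilon$. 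Hence \eqref{eq:prob2} and \eqref{eq:prob2_equiv} are equivalent in the stated limiting sense. The main obstacle is the middle step: one must argue that the penalized minimizer stays bounded and actually converges as $\vect{R}_\epsilon \to \vect{0}$ — which relies entirely on the full column rank of $\vect{H}$ keeping $\vect{H}^T\vect{P}_0\vect{H}$ invertible — and then that this limit genuinely solves the constrained problem rather than merely approximating it; everything else is routine manipulation of the closed forms from \theoref{theo:LS_sol} and \theoref{theo:LS_update}.
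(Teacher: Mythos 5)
Your proof is correct, but its second half takes a genuinely different route from the paper's. The paper solves the constrained problem \eqref{eq:prob2} independently by introducing Lagrange multipliers, writing down the KKT conditions, and deriving the closed form $\vect{W}_1 = \vect{W}_0 + \vect{P}_0\vect{H}(\vect{H}^T\vect{P}_0\vect{H})^{-1}(\vect{Z}-\vect{W}_0^T\vect{H})^T$; it then applies \theoref{theo:LS_update} to \eqref{eq:prob2_equiv} and observes that the penalized solution tends to the same expression as $\vect{R}_\epsilon \to \vect{0}^+$. You instead never touch the KKT system: you take the limit $\vect{W}_\star$ of the penalized minimizers (using, as the paper implicitly does, that $\text{rank}(\vect{H})=l$ and $\vect{P}_0>\vect{0}$ keep $\vect{H}^T\vect{P}_0\vect{H}$ invertible so the limit exists), verify feasibility of $\vect{W}_\star$ by direct substitution, and establish its optimality by the sandwich inequality $f_0(\vect{W}) \ge f_0(\vect{W}_\epsilon)$ for every feasible $\vect{W}$, passing to the limit and invoking strict convexity for uniqueness. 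What the paper's approach buys is the explicit Lagrangian solution \eqref{eq:prob2_sol} of the constrained problem, which is reused conceptually elsewhere; what your approach buys is a cleaner justification of the word ``equivalent'' in the limiting sense --- you prove the limit point actually solves the constrained problem rather than merely checking that two independently derived formulas agree. Both arguments are sound and both lean on \theoref{theo:LS_sol} and \theoref{theo:LS_update} for the closed forms.
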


\begin{proof}
To find the solution of \eqref{eq:prob2} we introduce the Language multipliers $\vect{V} \in \mathbb{R}^{n \times l}$ and form the Lagrangian $L(\vect{W}, \vect{V}) = f_0(\vect{W}) + 2\trace{\vect{V}^T(\vect{W}^T\vect{H} - \vect{Z})}$. The KKT conditions are:
\begin{align}
    & \partDer{L}{\vect{W}} = -\vPhi(\vect{Y} - \vect{W}^T \vPhi)^T +  \vect{H}\vect{V} = \vect{0} \label{eq:KKT_w} \\
    &\partDer{L}{\vect{V}} =  \vect{H}^T\vect{W} - \vect{Z}^T = \vect{0} \label{eq:KKT_V}
\end{align}
Solving \eqref{eq:KKT_w} for $\vect{W}$ we get:
\begin{equation} \label{eq:KKT_w2}
    \vect{W} = \vect{W}_0 - \vect{P}_0 \vect{H}\vect{V} 
\end{equation}
where $\vect{W}_0 = \vect{P}_0 \vPhi \vect{Y}^T$, $\vect{P}_0 = (\vPhi \vect{R}^{-1} \vPhi^T)^{-1}$ which is indeed invertible since $\text{rank}(\vPhi) = k$. Since $\vect{P}_0$ is invertible and $\text{rank}(\vect{H})=l$, i.e. has full column rank, we can substitute \eqref{eq:KKT_w2} in \eqref{eq:KKT_V} to solve for $\vect{V}$:
\begin{equation*}
    \vect{V} = -(\vect{H}^T\vect{P}_0\vect{H})^{-1}(\vect{Z} - \vect{W}_0^T\vect{H})^T
\end{equation*}
Substituting the last equation in \eqref{eq:KKT_w2}, we obtain the solution:
\begin{equation} \label{eq:prob2_sol}
    \vect{W}_1 = \vect{W}_0 + \vect{P}_0 \vect{H}(\vect{H}^T\vect{P}_0\vect{H})^{-1}(\vect{Z} - \vect{W}_0^T\vect{H})^T
\end{equation}
Turning our attention to problem \eqref{eq:prob2_equiv} we can apply the result of \theoref{theo:LS_update} and notice that for $\vect{R}_1 = \vect{R}_{\epsilon} \rightarrow \vect{0}^+$ we get indeed the same solution as in \eqref{eq:prob2_sol}.
\end{proof}

\begin{theorem} \label{theo:LS_downdate}
Given the solution $\vect{W}_1$, $\vect{P}_1$ of the problem
\begin{equation} \label{eq:prob3}
    \text{min}_{\vect{W}} f_0(\vect{W}) + \trace{(\vect{Z} - \vect{W}^T \vect{H})^T \vect{R}_1^{-1} (\vect{Z} - \vect{W}^T \vect{H}) }
\end{equation}
with $f_0(\vect{W}) = \trace{ (\vect{Y} - \vect{W}^T \vPhi)^T \vect{R}^{-1} (\vect{Y}- \vect{W}^T \vPhi) }$, $\vect{Y} \in \mathbb{R}^{n \times m}$, $\vect{W} \in \mathbb{R}^{k \times n}$, $\vPhi \in \mathbb{R}^{k \times m}$, $\text{rank}(\vPhi) = k$, $\vect{R}, \vect{R}_1 \in \mathcal{S}^n_{++}$, $\vect{Z} \in \mathbb{R}^{n \times l}$, $\vect{H} \in \mathbb{R}^{k \times l}$, the solution to the problem:
\begin{equation} \label{eq:prob3_b}
    \text{min}_{\vect{W}} f_0(\vect{W})
\end{equation}
can be calculated as:
\begin{align} 
    \hvect{W}_0 &= \scalemath{0.97}{\vect{W}_1 + \vect{P}_1 \vect{H}(-\vect{R}_1 + \vect{H}^T\vect{P}_1\vect{H})^{-1}(\vect{Z} - \vect{W}_1^T\vect{H})^T} \label{eq:prob3_sol_w} \\
    \hvect{P}_0 &= \vect{P}_1 - \vect{P}_1 \vect{H}(-\vect{R}_1 + \vect{H}^T\vect{P}_1\vect{H})^{-1} \vect{H}^T \vect{P}_1 \\
    \hvect{P}_0^{-1} &= \vect{P}_1^{-1} - \vect{H} \vect{R}_1^{-1} \vect{H}^T
\end{align}
where $\hvect{P}_0 > \vect{0}$.
\end{theorem}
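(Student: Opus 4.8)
The plan is to recognize \eqref{eq:prob3_b} as the base problem and \eqref{eq:prob3} as the one obtained from it by adding the term $\trace{(\vect{Z}-\vect{W}^T\vect{H})^T\vect{R}_1^{-1}(\vect{Z}-\vect{W}^T\vect{H})}$, so that \theoref{theo:LS_update} already encodes the relation between the (unknown) solution $(\hvect{W}_0,\hvect{P}_0)$ of \eqref{eq:prob3_b} and the given $(\vect{W}_1,\vect{P}_1)$, and then to invert that relation in closed form. First I would record what \theoref{theo:LS_sol} and the proof of \theoref{theo:LS_update} give in this setting: since $\text{min}_{\vect{W}} f_0$ is precisely the problem of \theoref{theo:LS_sol}, we have $\hvect{P}_0=(\vPhi\vect{R}^{-1}\vPhi^T)^{-1}>\vect{0}$ (which already settles the positivity claim) and $\hvect{P}_0^{-1}\hvect{W}_0=\vPhi\vect{R}^{-1}\vect{Y}^T$; and from the information-form step in the proof of \theoref{theo:LS_update}, applied with $(\hvect{W}_0,\hvect{P}_0)$ playing the role of the base solution, $\vect{P}_1^{-1}=\hvect{P}_0^{-1}+\vect{H}\vect{R}_1^{-1}\vect{H}^T$ and $\vect{P}_1^{-1}\vect{W}_1=\hvect{P}_0^{-1}\hvect{W}_0+\vect{H}\vect{R}_1^{-1}\vect{Z}^T$. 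Reading the first of these relations backwards gives $\hvect{P}_0^{-1}=\vect{P}_1^{-1}-\vect{H}\vect{R}_1^{-1}\vect{H}^T$, i.e. the last displayed formula.

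Next I would solve for $\hvect{P}_0$ and $\hvect{W}_0$ themselves. Writing $\hvect{P}_0^{-1}=\vect{P}_1^{-1}+\vect{H}(-\vect{R}_1)^{-1}\vect{H}^T$ and applying the matrix inversion lemma yields $\hvect{P}_0=\vect{P}_1-\vect{P}_1\vect{H}(-\vect{R}_1+\vect{H}^T\vect{P}_1\vect{H})^{-1}\vect{H}^T\vect{P}_1$, the second displayed formula. For the weights, isolate $\hvect{W}_0=\hvect{P}_0(\vect{P}_1^{-1}\vect{W}_1-\vect{H}\vect{R}_1^{-1}\vect{Z}^T)$ from the information-form identity above, substitute this expression for $\hvect{P}_0$, and reduce the term $-\hvect{P}_0\vect{H}\vect{R}_1^{-1}\vect{Z}^T$ using $\hvect{P}_0\vect{H}\vect{R}_1^{-1}=-\vect{P}_1\vect{H}(-\vect{R}_1+\vect{H}^T\vect{P}_1\vect{H})^{-1}$, which is the verbatim analogue of \eqref{eq:PH_y_identity} with $\vect{R}_1\to-\vect{R}_1$ and $\vect{P}_0\to\vect{P}_1$. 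Collecting terms turns $\hvect{P}_0\vect{P}_1^{-1}\vect{W}_1-\hvect{P}_0\vect{H}\vect{R}_1^{-1}\vect{Z}^T$ into $\vect{W}_1+\vect{P}_1\vect{H}(-\vect{R}_1+\vect{H}^T\vect{P}_1\vect{H})^{-1}(\vect{Z}-\vect{W}_1^T\vect{H})^T$, which is \eqref{eq:prob3_sol_w}. In brief, the downdate formulas are those of \theoref{theo:LS_update} with $\vect{R}_1$ replaced by $-\vect{R}_1$ and the roles of $(\vect{W}_0,\vect{P}_0)$ and $(\vect{W}_1,\vect{P}_1)$ interchanged, and the algebra is word-for-word that of the proof of \theoref{theo:LS_update}.

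The one genuine obstacle is well-posedness: since the replacement weight $-\vect{R}_1$ is negative rather than positive definite, one must confirm that $-\vect{R}_1+\vect{H}^T\vect{P}_1\vect{H}$, which appears inverted in all three formulas, is actually invertible, so that the matrix inversion lemma and the substitutions above are legitimate. This follows from the matrix inversion lemma itself: since $\vect{P}_1^{-1}$ and $\hvect{P}_0^{-1}=\vect{P}_1^{-1}-\vect{H}\vect{R}_1^{-1}\vect{H}^T$ are both invertible (indeed positive definite), the associated capacitance matrix $-\vect{R}_1+\vect{H}^T\vect{P}_1\vect{H}$ is invertible as well; explicitly, substituting $\vect{P}_1=\hvect{P}_0-\hvect{P}_0\vect{H}(\vect{R}_1+\vect{H}^T\hvect{P}_0\vect{H})^{-1}\vect{H}^T\hvect{P}_0$ and simplifying gives $-\vect{R}_1+\vect{H}^T\vect{P}_1\vect{H}=-\vect{R}_1(\vect{R}_1+\vect{H}^T\hvect{P}_0\vect{H})^{-1}\vect{R}_1<\vect{0}$. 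With this invertibility in hand every inverse written above is well defined, the lemma applies in the usual way, and the three closed-form expressions are forced; $\hvect{P}_0>\vect{0}$ has already been obtained directly from \theoref{theo:LS_sol}.
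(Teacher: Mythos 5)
Your proof is correct and follows essentially the same route as the paper: both write the stationarity condition of $\text{min}_{\vect{W}} f_0$ in information form as $\hvect{P}_0^{-1} = \vect{P}_1^{-1} - \vect{H}\vect{R}_1^{-1}\vect{H}^T$ and $\hvect{P}_0^{-1}\hvect{W}_0 = \vect{P}_1^{-1}\vect{W}_1 - \vect{H}\vect{R}_1^{-1}\vect{Z}^T$, then apply the matrix inversion lemma with the sign-flipped weight $-\vect{R}_1$ and reuse the identity \eqref{eq:PH_y_identity} to reach \eqref{eq:prob3_sol_w}. The only divergences are minor: you obtain $\hvect{P}_0 > \vect{0}$ immediately from \theoref{theo:LS_sol} rather than a posteriori from the downdate formula, and you establish the negative definiteness of $-\vect{R}_1 + \vect{H}^T\vect{P}_1\vect{H}$ via the (correct) explicit factorization $-\vect{R}_1 + \vect{H}^T\vect{P}_1\vect{H} = -\vect{R}_1(\vect{R}_1+\vect{H}^T\hvect{P}_0\vect{H})^{-1}\vect{R}_1$ instead of the paper's Schur-complement argument on the block matrix $\bigl[\begin{smallmatrix}\vect{P}_1^{-1} & \vect{H}\\ \vect{H}^T & \vect{R}_1\end{smallmatrix}\bigr]$ --- both justifications are valid.
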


\begin{proof}
    Denoting
    \begin{equation*}
        f_1(\vect{W}) = f_0(\vect{W}) + \trace{(\vect{Z} - \vect{W}^T \vect{H})^T \vect{R}_1^{-1} (\vect{Z} - \vect{W}^T \vect{H}) }
    \end{equation*}
    it follows that 
    \begin{equation*}
        f_0(\vect{W}) = f_1(\vect{W}) - \trace{(\vect{Z} - \vect{W}^T \vect{H})^T \vect{R}_1^{-1} (\vect{Z} - \vect{W}^T \vect{H}) }
    \end{equation*}
    We can rewrite $f_1(\vect{W})$ as:
    \begin{equation*}
        f_1(\vect{W}) = \trace{ (\bvect{Y} - \vect{W}^T \bvect{\Phi})^T \bvect{R}^{-1} (\bvect{Y}- \vect{W}^T \bvect{\Phi}) }
    \end{equation*}
    with
    \begin{equation*}
        \bvect{Y} = [\vect{Y}^T \ \vect{Z}^T]^T, \ \bvect{\Phi} = [\vPhi^T \ \vect{H}^T ]^T, \ \bvect{R} = \text{blkdiag}(\vect{R}, \vect{R}_1)
    \end{equation*}
    Minimizing $f_0(\vect{W})$ we have:
    \begin{align}
        & \partDer{f_0}{\vect{W}} = \partDer{\left(\scalemath{0.82}{f_1(\vect{W}) - \trace{(\vect{Z} - \vect{W}^T \vect{H})^T \vect{R}_1^{-1} (\vect{Z} - \vect{W}^T \vect{H}) }} \right)}{\vect{W}} = \vect{0} \nonumber \\
        &(\vect{P}_1^{-1} - \vect{H}\vect{R}_1^{-1}\vect{H}^T)\vect{W} = (\bvect{\Phi} \bvect{R}^{-1} \bvect{Y}^T - \vect{H}\vect{R}_1^{-1}\vect{Z}^T) \label{eq:prob3_temp_w0}
    \end{align}
    where $\vect{P}_1^{-1} = \bvect{\Phi} \bvect{R}^{-1} \bvect{\Phi}^T = \vect{P}_0^{-1} + \vect{H}\vect{R}_1^{-1}\vect{H}^T$ and $\vect{P}_0^{-1} = \vPhi \vect{R}^{-1} \vPhi^T$.
    Since $\vect{R} > \vect{0}$ and $\text{rank}(\vPhi) = k$ it follows that $\vect{P}_0^{-1} > \vect{0}$ which also entails that $\vect{P}_1^{-1} > \vect{0}$ as $\vect{R}_1 > \vect{0}$ and $\vect{H}\vect{R}_1^{-1}\vect{H}^T \ge \vect{0}$. Notice also that $(\vect{P}_1^{-1} - \vect{H}\vect{R}_1^{-1}\vect{H}^T) = \vect{P}_0^{-1}$ hence we can invert it in \eqref{eq:prob3_temp_w0} to get:
    \begin{align*}
        \vect{W} = \scalemath{0.93}{(\vect{P}_1^{-1} - \vect{H}\vect{R}_1^{-1}\vect{H}^T)^{-1}(\bvect{\Phi} \bvect{R}^{-1} \bvect{Y}^T + \vect{H}(-\vect{R}_1^{-1})\vect{Z}^T)}
    \end{align*}
    Applying the matrix inversion lemma we have:
    \begin{equation} \label{eq:prob3_temp_w2}
        \begin{aligned}
            \vect{W} &= \scalemath{0.95}{\vect{W}_1 - \vect{P}_1 \vect{H} (-\vect{R}_1 + \vect{H}^T\vect{P}_1\vect{H})^{-1}(\vect{W}_1^T\vect{H})^T +} \\
            &\scalemath{0.95}{(\vect{P}_1 - \vect{P}_1 \vect{H} (-\vect{R}_1 + \vect{H}^T\vect{P}_1\vect{H})^{-1}\vect{H}^T\vect{P}_1)\vect{H}(-\vect{R}_1^{-1})\vect{Z}^T}
        \end{aligned}
    \end{equation}
    where $\vect{W}_1 = \vect{P}_1 \bvect{\Phi} \bvect{R}^{-1} \bvect{Y}^T$, $\vect{P}_1 = (\bvect{\Phi} \bvect{R}^{-1} \bvect{\Phi}^T)^{-1}$ is indeed the solution of $\text{min}_{\vect{W}} f_1(\vect{W})$ based on \theoref{theo:LS_sol}.
    Notice that $-\vect{R}_1 + \vect{H}^T\vect{P}_1\vect{H}$ is in fact negative definite, hence it is invertible. To prove so, consider the following matrix:
    \begin{equation*}
        \vect{M} = 
        \begin{bmatrix}
            \vect{P}_1^{-1} & \vect{H} \\
            \vect{H}^T & \vect{R}_1
        \end{bmatrix}
    \end{equation*}
    Since $\vect{P}_1^{-1} > \vect{0}$ and the Schur complement of $\vect{M}$ w.r.t. the upper left block matrix is $\vect{P}_1^{-1} - \vect{H}\vect{R}_1^{-1}\vect{H}^T = \vect{P}_0^{-1} > \vect{0}$ it follows that $\vect{M} > \vect{0}$ \cite{strang2006linear}. Therefore, as $\vect{R}_1 > \vect{0}$ and $\vect{M} > \vect{0}$ it should also hold that the Schur complement of $\vect{M}$ w.r.t. the lower right block matrix should be positive definite, i.e.:
    \begin{equation*}
        \vect{R}_1^{-1} - \vect{H}^T \vect{P}_1 \vect{H} > \vect{0} \Rightarrow -\vect{R}_1^{-1} + \vect{H}^T \vect{P}_1 \vect{H} < \vect{0}
    \end{equation*}
    We can further process the last term in \eqref{eq:prob3_temp_w2} similar to the analysis in \eqref{eq:PH_y_identity} to find that:
    \begin{align*}
        &\scalemath{0.95}{(\vect{P}_1 - \vect{P}_1 \vect{H} (-\vect{R}_1 + \vect{H}^T\vect{P}_1\vect{H})^{-1}\vect{H}^T\vect{P}_1)\vect{H}(-\vect{R}_1^{-1})\vect{Z}^T =} \\
        &\scalemath{0.95}{\vect{P}_1 \vect{H} (-\vect{R}_1 + \vect{H}^T\vect{P}_1\vect{H})^{-1}\vect{Z}^T}
    \end{align*}
    and substituting the last in \eqref{eq:prob3_temp_w2} we obtain the result given by \eqref{eq:prob3_sol_w}. Finally, having already established that $\vect{P}_1 > \vect{0}$ and $-\vect{R}_1^{-1} + \vect{H}^T \vect{P}_1 \vect{H} < \vect{0}$ it follows that $- \vect{P}_1 \vect{H}(-\vect{R}_1 + \vect{H}^T\vect{P}_1\vect{H})^{-1} \vect{H}^T \vect{P}_1 \ge \vect{0}$ hence $\hvect{P}_0 > \vect{0}$.
\end{proof}

\section*{Declarations}

No funding was received for conducting this work and there isn't any kind of conflict of interest.



\bibliography{sn-bibliography}


\end{document}